\def\eqref#1{equation~\ref{#1}}
\def\1{\bm{1}}
\def\rt{{\textnormal{t}}}
\def\ry{{\textnormal{y}}}
\def\rvc{{\mathbf{c}}}
\def\rvd{{\mathbf{d}}}
\def\rvp{{\mathbf{p}}}
\def\rvt{{\mathbf{t}}}
\def\rvy{{\mathbf{y}}}
\def\rmX{{\mathbf{X}}}
\def\rmZ{{\mathbf{Z}}}
\def\vc{{\bm{c}}}
\def\vh{{\bm{h}}}
\def\mX{{\bm{X}}}
\def\mZ{{\bm{Z}}}
\DeclareMathAlphabet{\mathsfit}{\encodingdefault}{\sfdefault}{m}{sl}
\SetMathAlphabet{\mathsfit}{bold}{\encodingdefault}{\sfdefault}{bx}{n}
\def\gE{{\mathcal{E}}}
\def\gG{{\mathcal{G}}}
\def\gV{{\mathcal{V}}}
\newcommand{\E}{\mathbb{E}}
\newtheoremstyle{defstyle}
  {\topsep} % Space above
  {\topsep} % Space below
  {} % Body font
  {} % Indent amount
  {\bfseries} % Theorem head font
  {.} % Punctuation after theorem head
  {.2em} % Space after theorem head
  {} % Theorem head spec (can be left empty, meaning `normal')
\theoremstyle{defstyle}
\newtheorem{definition}{Definition}
\newtheorem{assumption}{Assumption}
\newtheorem{proposition}{Proposition}
\theoremstyle{remark}
\newtheorem{claim}{Claim}
\theoremstyle{defstyle}
\newtheorem{problem}{Problem}
\title{Learning Exposure Mapping Functions\\ for Inferring Heterogeneous Peer Effects
%%%% Cite as
%%%% Update your official citation here when published 
% \thanks{\textit{\underline{Citation}}: 
% \textbf{Authors. Title. Pages.... DOI:000000/11111.}} 
}
\author{
  Shishir Adhikari, Sourav Medya, Elena Zheleva \\
  Department of Computer Science \\
  University of Illinois Chicago \\
  Chicago, IL, USA\\
  \texttt{\{sadhik9, medya, ezheleva\}@uic.edu} \\
  %% examples of more authors
  %  \And
  % Author3 \\
  % Affiliation \\
  % Univ \\
  % City\\
  % \texttt{email@email} \\
  %% \AND
  %% Coauthor \\
  %% Affiliation \\
  %% Address \\
  %% \texttt{email} \\
  %% \And
  %% Coauthor \\
  %% Affiliation \\
  %% Address \\
  %% \texttt{email} \\
  %% \And
  %% Coauthor \\
  %% Affiliation \\
  %% Address \\
  %% \texttt{email} \\
}
\newcommand{\newchange}[1]{{\textcolor{black}{{#1}}}}
\newcommand{\ourmodel}{\textsc{EgoNetGnn}\xspace}
\renewcommand{\cite}{\citep}
\begin{document}
\maketitle

\begin{abstract}
Peer effect refers to the difference in counterfactual outcomes for a unit resulting from different levels of peer exposure, the extent to which the unit is exposed to the treatments, actions, or behaviors of its peers. Peer exposure is typically captured through an explicitly defined exposure mapping function that aggregates peer treatments and outputs peer exposure. Exposure mapping functions range from simple functions like the number or fraction of treated friends to more sophisticated functions that allow for different peers to exert different degrees of influence. However, the true function is rarely known in practice and when the function is misspecified, this leads to biased causal effect estimation. To address this problem, the focus of our work is to move away from the need to explicitly define an exposure mapping function and instead introduce a framework that allows learning this function automatically. We develop \ourmodel, a graph neural network (GNN), for heterogeneous peer effect estimation that automatically learns the appropriate exposure mapping function and allows for complex peer exposure mechanisms that involve not only peer treatments but also attributes of the local neighborhood, including node, edge, and structural attributes. We theoretically and empirically show that GNN models that use peer exposure based on the number or fraction of treated peers or learn peer exposure naively face difficulty accounting for such influence mechanisms. Our evaluation on synthetic and semi-synthetic network data shows that our method is more robust to different unknown underlying influence mechanisms when compared to state-of-the-art baselines.
\end{abstract}

% keywords can be removed
% \keywords{First keyword \and Second keyword \and More}

% \vspace{-2em}
\section{Introduction}

In networked environments, the outcome of a unit can be influenced by the treatments or outcomes of other units, a phenomenon known as interference.
For example, in a contact network, the smoking habits of peers may affect an individual's respiratory health, and in a social network the political affiliations of peers may influence one's stance on a policy issue like immigration. Peer effects capture this influence by comparing an individual’s outcomes under different peer network conditions (e.g., having no smoker peers versus some smoker peers, or observed peer political affiliations versus counterfactual, flipped affiliations). Peer effect estimation is important for policy-making and targeted intervention design in many domains, including healthcare~\cite{barkley-aas20}, online advertisement~\cite{nabi-frontiers22}, and education~\cite{patacchini-jcbo17}. %\newchange{Estimating peer effects relies heavily on how we define \emph{peer exposure}, which captures the extent to which a unit is exposed to the treatments, actions, or behaviors of its peers. Most works assume that such a peer exposure mechanism is known. Our work relaxes this assumption by proposing to learn the unknown peer exposure representation automatically.}

% {Existing research has considered three types of peer exposure: binary peer exposure~\cite{bargagli-aas25}, homogeneous peer exposure~\cite{hudgens-jasa08,ugander-kdd13,jiang-cikm22,chen-icml24}, and heterogeneous peer exposure~\cite{forastiere-asa21,qu-arxiv21,yuan-www21,zhao-arxiv22}. Homogeneous peer exposure assumes all peers contribute to the exposure equally and is agnostic to the identity of the treated peers while heterogeneous peer exposure considers that different peers can exert different influence.}

\begin{figure}
    \centering
    \includegraphics[width=0.95\linewidth]{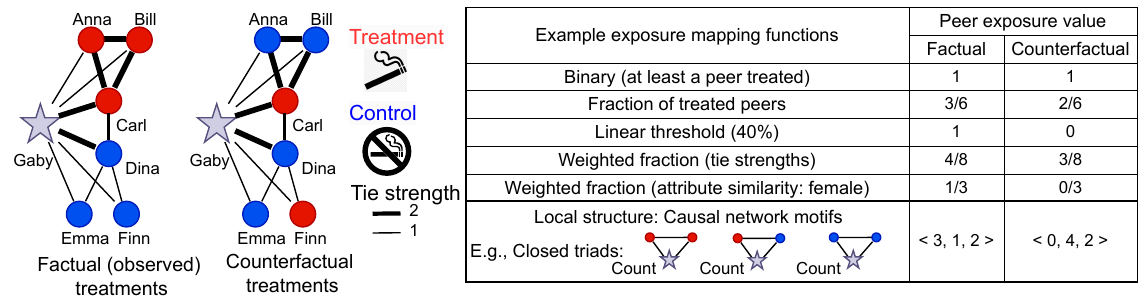}
    \caption{Illustration of different possible peer exposure representations for a node (Gaby) in a toy peer network. Red nodes represent peers in the treatment group, and blue nodes represent peers in the control group. Gray star node represents the node that has a fixed treatment.}
    \label{fig:illustration}
    \vspace{-1.5em}
\end{figure}

Peer network conditions are typically captured through an \emph{explicitly defined}
exposure mapping function~\cite{aronow-aas17} that summarizes the peer treatments and peer network and outputs peer exposure, which is the equivalent to a composite peer
treatment value. The peer effect is defined as the difference in outcomes under two distinct levels of peer exposure. Different peer exposure mapping functions capture different possible underlying influence mechanisms. Typically, domain experts define exposure
mapping functions appropriate to the causal question and domain of interest. The advantage of exposure mapping functions is that they reduce the high dimensionality of peer network attributes and that they are invariant
to irrelevant contexts (e.g., permutation of peers).  

Figure ~\ref{fig:illustration}
presents examples of prominent exposure mapping functions and the resulting peer exposure values for a toy peer network.
%\newchange{Typically, researchers manually specify an exposure mapping function~\cite{aronow-aas17} that maps high dimensional peer treatments and other relevant contexts to a low dimensional peer exposure representation.} We illustrate prominent representations of peer exposure through the toy contact network in Figure ~\ref{fig:illustration}, and include a more detailed discussion of the related work in Appendix \ref{ap:related-work}. 
%Figure ~\ref{fig:illustration} shows 
The first graph shows Gaby's peer network %i.e., a subnetwork with a unit (Gaby) and immediate peers with edges among them, 
along with the observed (i.e., factual) treatments for Gaby's peers. The second graph shows hypothetical (i.e., counterfactual) treatments for the peers. The peers in the treatment group (e.g., smokers) and control group (e.g., non-smokers) are depicted as red and blue nodes, respectively. The edge weights capture the tie strengths in the network. Binary peer exposure mapping is the simplest and it summarizes peer treatments to 0 or 1, e.g., whether any peers have been treated~\cite{bargagli-aas25} or whether the weighted treatment of peers has reached a linear threshold~\cite{tran-aaai22}. Some exposure mapping functions assume that all peers influence equally (e.g., fraction of treated peers~\cite{hudgens-jasa08,jiang-cikm22}), while others consider that different peers can exert different degrees of influence (e.g., weighted fraction~\cite{forastiere-asa21} or sum~\cite{zhao-arxiv22} of treated peers). Peer exposure has also been modeled with counts of different causal network motifs, i.e., recurrent subgraphs in a unit's peer network with treatment assignments as attributes~\cite{yuan-www21}. We discuss the related work in more detail in the Appendix \ref{ap:related-work}. %\newchange{Existing approaches fall into three broad categories: binary exposure (e.g., at least one smoking peer)~\cite{bargagli-aas25}, homogeneous exposure (e.g., fraction of smoker peers)~\cite{hudgens-jasa08,ugander-kdd13,jiang-cikm22,tran-aaai22,chen-icml24}, and heterogeneous exposure (e.g., weighted fraction based on tie strength or attributes)~\cite{forastiere-asa21,qu-arxiv21,yuan-www21,zhao-arxiv22}.} \citet{yuan-www21} represent peer exposure by counting \emph{causal network motifs}, attributed subgraphs with treatment assignments as the attributes. Figure \ref{fig:illustration} shows closed triad causal network motifs and their counts. For example, if treated peers have high clustering coefficient, Gaby could be exposed to a higher volume of secondhand smoke from her well-connected peers, assuming they smoke together. The count of the first closed triad causal network motif captures such peer exposure mechanism.

A key challenge in peer effect estimation is that the true exposure mapping function is rarely known in practice and when the function is misspecified, this leads to biased causal effect estimation. The focus of this paper is to move away from the need to explicitly define an exposure mapping function and instead learn this function automatically from data.
%Different peer exposure representations capture different possible underlying influence mechanisms. However, we rarely know the true mechanism and misspecifying the peer exposure can lead to biased causal effect estimation. \newchange{\citet{savje-biometrika24} advocates for interpretable but possibly misspecified exposure mappings and characterizes causal estimation errors due to misspecified exposure mappings, but follow-up research~\cite{auerbach-arxiv24} has highlighted the importance of capturing underlying interference mechanisms in policymaking.} Therefore, we propose learning the exposure mapping function automatically from data, which 
This has the advantage of reducing subjectivity and allowing for automated representation of peer exposure under unknown and complex peer influence mechanisms. %More specifically, we focus on learning the exposure mapping function to estimate 
More specifically, we study the problem of exposure mapping function learning in the context of heterogeneous peer effect estimation. Heterogeneous peer effects (HPE) denote variation in peer effects across individuals that may originate from personal attributes or from characteristics of their peer networks. %, where heterogeneity manifests due to the variation in %counterfactual outcomes in the units 
%peer effects for different units. 
For example, while having a friend who smokes may have a negative effect on health for some people, it may make no difference for others.   %with the same peer exposure but distinct contexts. 
% While we introduce exposure mapping function learning in the context of peer effects, the concepts can easily be adapted to other causal effects under interference, such as direct and total effects.

% \citet{ma-aistats21} summarize the covariates of treated peers using a graph neural network (GNN) to learn a peer exposure embedding in addition to homogeneous peer exposure. \citet{ma-kdd22} employ similar method but for hypergraphs to model group interactions. \cite{adhikari-mlj24} use GNNs to learn peer exposure embedding by addressing unknown peer influence mechanisms, but their scope is limited to direct effect estimation, i.e., the effect of a unit's own treatment.
% For example, \fixme{TODO add example with the same fraction of treated peers but different exposure conditions due to structure}.
{We propose \ourmodel, {a novel graph neural network (GNN) architecture}, that automatically learns a relevant exposure mapping function under appropriate identifiability assumptions. \ourmodel allows for complex peer influence mechanisms that, in addition to peer treatments, can involve the local neighborhood structure, node, and edge attributes.} Our work builds upon the success of utilizing neural networks (NNs)~\cite{shalit-icml17,im-arxiv21,shi-neurips19} and, recently, graph neural networks (GNNs)~\cite{jiang-cikm22,cai-cikm23,chen-icml24,khatami-arxiv24} for end-to-end learning of 
counterfactual outcome models or causal effect estimators.
%\textit{feature mapping functions} and \textit{counterfactual outcome models} or \textit{causal effect estimators}. 
%A feature mapping function maps raw features to feature embedding to capture potential confounders and \emph{effect modifiers}, i.e., contexts that lead to heterogeneous causal effects. A counterfactual outcome model~\cite{shalit-icml17,ma-aistats21} predicts counterfactual outcomes for different levels of treatment, while an effect estimator~\cite{shi-neurips19,chen-icml24} directly learns the causal effect of interest. 
Few studies have utilized GNNs to learn the exposure mapping function~\cite{mao-icassp25,wu-iclr25} or to derive peer exposure embedding by aggregating feature embeddings and peer treatments~\cite{adhikari-mlj24,zhao-arxiv22}. 
% \citet{mao-icassp25} have explored the use of GNNs and clustering to learn discrete exposure conditions and their probabilities, aiming to estimate overall causal effects in networks.
However, these works use off-the-shelf GNNs like GCN~\cite{kipf-iclr16} or GIN~\cite{xu-iclr18} and prior work~\cite{chen-neurips20} has shown such architectures lack expressiveness for counting subgraphs with cycles and for capturing mechanisms involving local neighborhood structure. On the other hand, counts of such subgraphs, like causal network motifs, are rich features for capturing local structural contexts~\cite{yuan-www21}, but they are expensive to compute, inflexible, and may not capture every local structural context (e.g., edge weights).

%We propose \ourmodel, \newchange{a novel graph neural network (GNN) architecture}, that automatically learns the appropriate exposure mapping function, allowing for complex peer influence mechanisms that, in addition to peer treatments, can involve the local neighborhood structure, node, and edge attributes. 
One of the biggest strengths of \ourmodel is the ability to capture the exposure mapping functions studied in previous works, including finding relevant causal network motifs and scaling to higher-order motifs. 
To add robustness to the downstream peer effect estimation task, \ourmodel is designed to learn the exposure mapping function to produce representation that is \emph{expressive} to differentiate between different peer exposure conditions and \emph{invariant} to irrelevant contexts. Moreover, \ourmodel is designed to promote bounded representation with substantial coverage of possible peer exposure values.
{Figure~\ref{fig:framework} shows an overview of \ourmodel. While most peer effect estimation frameworks contain a feature mapping and a counterfactual outcome model component, the novel additional component in ours is the custom-designed exposure mapping function learning (marked in green in the figure).}
We design this component to excel in counting attributed subgraphs, such as causal network motifs, enhancing its expressiveness to capture unknown underlying peer exposure mechanisms. % involving  neighborhood structure.
% Furthermore, \ourmodel is designed to promote invariance to irrelevant contexts and balanced representation for adding robustness to the downstream peer effect estimation task. %\newchange{}
% Recently, GNNs have been extensively used for causal effect estimation in networks~\cite{guo-wsdm20,jiang-cikm22,chen-icml24}, but their use has been mostly limited to automatic feature aggregation and addressing network confounding. 
We theoretically and empirically show that, unlike \ourmodel, existing GNN-based approaches that solely rely on homogeneous peer exposure or learn heterogeneous peer exposure naively lack expressiveness in capturing heterogeneous peer influence mechanisms based on local neighborhood structure. %Moreover, since our method is more expressive, it incorporates the exposure mapping functions studied in previous works and enables capturing relevant causal network motifs in a scalable way.
\begin{figure*}[!t]
    \centering
    \includegraphics[width=0.85\linewidth]{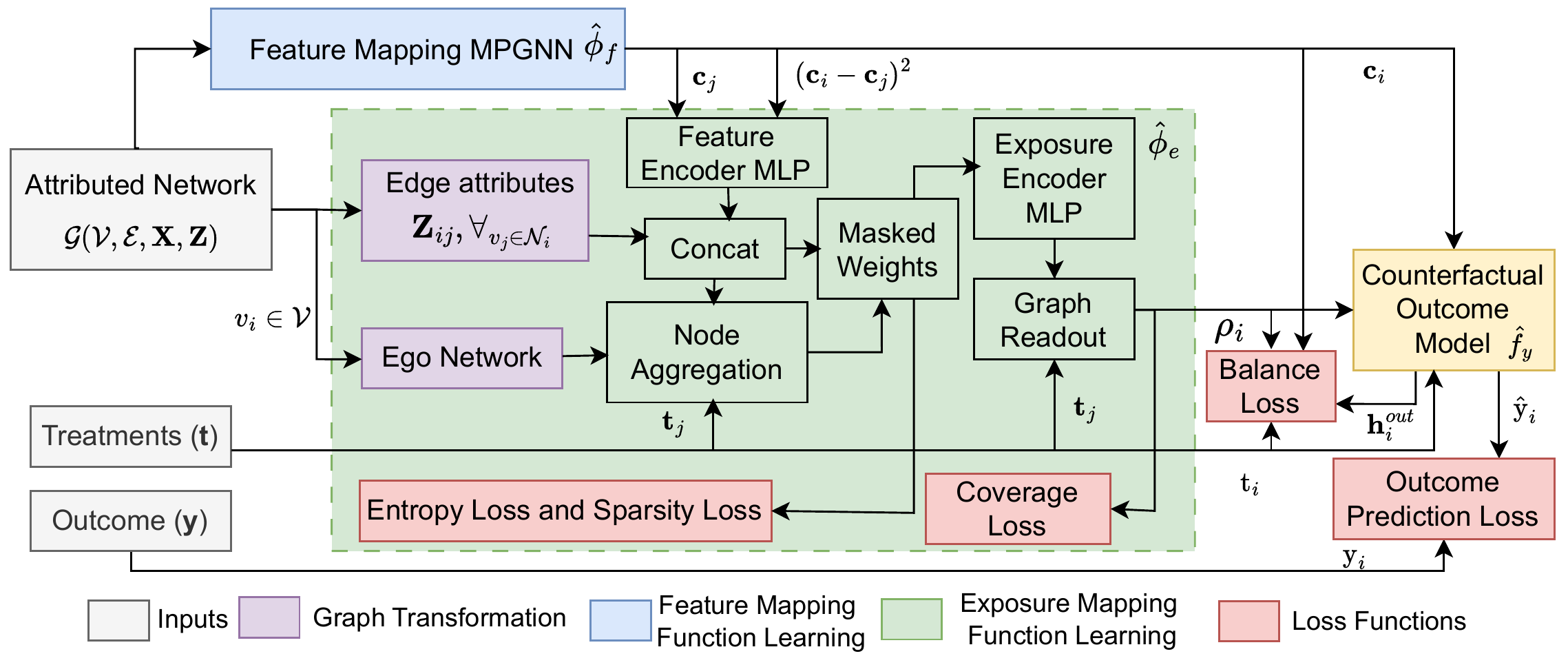}
    \vspace{-1em}
    \caption{{An overview of the proposed \ourmodel model to learn exposure mapping function for peer effect estimation. \ourmodel extracts ego networks, for each node $v_i$, with peer treatments along with feature embedding and its edge attributes as node attributes. Then, node-level aggregations are performed to capture local neighborhood contexts. These contexts are passed through a masked weight layer and encoded by an multi-layer perceptron (MLP) to learn relevant influence mechanisms and summarized with graph-level aggregation. The learned peer exposure embeddings ($\bm{\rho}_i$), along with the feature embeddings ($\rvc_i$), and treatment ($t_i$) are passed to a counterfactual outcome model that is used to infer peer effects. The graph transformation ensures expressiveness, while balance, coverage, entropy, and sparsity losses promote the robustness of the peer exposure representation.}}
    \label{fig:framework}
    \vspace{-1em}
\end{figure*}

% \vspace{-1em}
\section{\newchange{Causal Inference Problem Setup}}
\label{sec:problem_setup}
\textbf{Notations}. We represent the network as an undirected graph {\small $\gG=({\gV},{\gE})$} with a set of {\small $n=|{\gV}|$} nodes, a set of edges {\small ${\gE}$}, node attributes {\small $\rmX$}, and edge attributes  {\small $\rmZ$}.
Let {\small $\rvt=<\rt_1,...,\rt_i,...,\rt_n>$} be a random variable comprising the treatment variables {\small $\rt_i$} for each node $v_i \in \gV$ in the network and {$\ry_i$} be a random variable for $v_i$'s outcome.
Let {\small $\bm{\pi}=<\pi_1,...,\pi_i,...,\pi_n>$} be an assignment to {\small $\rvt$} with {\small $\pi_i \in \{0,1\}$} assigned to {\small $\rt_i$}. 
Let {\small $\rvt_{-i}=\rvt \setminus \rt_i$} and {\small $\bm{\pi}_{-i}=\bm{\pi} \setminus \pi_i$} denote random variable and its value for treatment assignment to other units except $v_i$, {which we refer to as peer treatments for convenience}.

% \newchange{\textbf{Exposure mapping function}. Peer effect, in Eq. \ref{eq:peer_eff_intermediate}, is defined in terms of peer exposure, but peer exposure itself is not observed directly and cannot be intervened upon. Peer exposure, $\bm{\rho}_i$, depends on peer treatments, $\mathbf{T}_{-i}=\bm{\pi}_{-i}$, and other relevant contexts in the set $\{G, \mathbf{X}, \mathbf{Z}\}$), which are determined by unknown underlying influence mechanisms.
{Peer exposure reflects how much a unit is exposed to peer treatments and is defined as follows.}
\begin{definition}[Peer exposure and exposure mapping function]\label{def:peer-exposure}
    Peer exposure for unit $v_i$ is defined as {\small $\bm{\rho}_i \in [0,1]^d = \phi_e(\bm{\pi}_{-i}, \gG, \rmX, \rmZ)$}, where $\phi_e$ is the \textit{exposure mapping function} that maps high-dimensional contexts {\small $\{ \bm{\pi}_{-i}, \gG, \rmX, \rmZ\}$} to a $d$-dimensional peer exposure representation bounded between $0$ and $1$ such that {\small $\ry_i(\rt_i=\pi_i, \rvt_{-i}=\bm{\pi}_{-i})|\{\gG,\rmX, \rmZ\} = \ry_i(\rt_i=\pi_i, \rvp_{i}=\bm{\rho}_i)|\{\gG,\rmX, \rmZ\}$}.
\end{definition}
{Definition \ref{def:peer-exposure} maps peer treatments {\small $\rvt_{-i}=\bm{\pi}_{-i}$} and peer network contexts {\small $\{\gG,\rmX, \rmZ\}$}} to peer exposure {\small $\rvp_{i}=\bm{\rho}_i$} in terms of equivalence of counterfactual outcomes {\small $\ry_i(\rt_i=\pi_i, \rvt_{-i}=\bm{\pi}_{-i})$ and  $\ry_i(\rt_i=\pi_i, \rvp_{i}=\bm{\rho}_i)$}. Here, {\small $\ry_i(\rt_i=\pi_i, \rvp_{i}=\bm{\rho}_{i})$}, captures that, in interference settings, {the counterfactual outcome of a unit $v_i$ is influenced by both unit's treatment {\small $\rt_i=\pi_i$} and peer exposure {\small $\rvp_{i}=\bm{\rho}_{i}$}}.
Note that the exposure mapping function could map different contexts to the same peer exposure. 
% In this work, we focus on learning the exposure mapping function $\phi_e$ for estimating heterogeneous peer effects defined next.

% \textbf{Heterogeneous peer effect}. 
% Three main types of causal effects are studied in the context of interference: direct effects, peer effects and total effects. 
{Peer effect refers to the difference in counterfactual outcomes for different values of peer exposure.} %The \textit{heterogeneous peer effect} (HPE) for a unit $v_i$ for peer exposures {\small $\rvp_{i}=\bm{\rho}_{i}$} versus {\small $\rvp_{i}=\bm{\rho}'_{i}$} and unit's treatment {\small $\rt_i=\pi_i$} conditioned on the unit's contexts { $\rvc_i$} is defined as:
Heterogeneous peer effects (HPE) refers to different units having different peer effects dependent on their contexts. For any given unit $v_i$, its heterogeneous peer effect is described through its context, i.e., for peer exposures {\small $\rvp_{i}=\bm{\rho}_{i}$} versus {\small $\rvp_{i}=\bm{\rho}'_{i}$} and unit's treatment {\small $\rt_i=\pi_i$} conditioned on the unit's contexts { $\rvc_i$}, it is defined as:
{\small
\begin{equation}
    \label{eq:peer_eff_intermediate}
    \delta_i(\bm{\rho}_{i}, \bm{\rho}'_{i}) = \mathbb{E}[\ry_i(\rt_i=\pi_i, \rvp_{i}=\bm{\rho}_{i})| \rvc_i] - \mathbb{E}[\ry_i(\rt_i=\pi_i, \rvp_{i}=\bm{\rho'}_{i}) | \rvc_i],
\end{equation}
}% The conditioning of {\small $\rvc_i$} in Eq. \ref{eq:peer_eff_intermediate} indicates that the counterfactual outcome for the same value of treatment $\rt_i$ and peer exposure $\rvp_i$ could be heterogeneous and vary for different unit $v_i$ depending on context {\small $\rvc_i$}, referred to as \textit{effect modifiers} (e.g., unit's degree or node attribute).
\newchange{where expectation is over units with similar contexts {\small $\rvc_i$}, referred to as \textit{effect modifiers} (e.g., unit's degree or node attributes), defined by a feature mapping function of contexts {\small $\{\gG,\rmX, \rmZ\}$} from $v_i$'s perspective, i.e., {\small $\rvc_i=\phi_f(v_i, \gG, \rmX, \rmZ)$}.}
% The effect modifiers {\small $\rvc_i$} are defined by a feature mapping function of node attributes {\small $\rmX$}, edge attributes {\small $\rmZ$}, and network structure {\small $\gG$}, i.e., {\small $\rvc_i=\phi_f(\gG, \rmX, \rmZ)$}.
% Our work focuses on learning the exposure mapping function {\small $\phi_e$} for estimating heterogeneous peer effects.
Substituting peer exposures {\small $\bm{\rho}_{i}$} and {\small $\bm{\rho}'_{i}$} with corresponding exposure mapping functions for two peer treatment assignments $\bm{\pi}_{-i}$ versus $\bm{\pi'}_{-i}$ in Eq. \ref{eq:peer_eff_intermediate}, we get:
{\small
\begin{equation}\label{eq:peer_eff_exp_map}
    % \begin{split}
    \delta_i(\bm{\pi}_{-i}, \bm{\pi}'_{-i}) = \mathbb{E}[\ry_i(\rt_i=\pi_i, \rvp_{i}=\phi_e(\bm{\pi}_{-i}, \gG, \rmX, \rmZ))| \rvc_i] - \mathbb{E}[\ry_i(\rt_i=\pi_i, \rvp_{i}=\phi_e(\bm{\pi}'_{-i}, \gG, \rmX, \rmZ)) | \rvc_i].
    % \end{split}
\end{equation}
}
\textbf{Causal identification}. Now, we discuss the identification of peer effects that involves expressing counterfactual outcomes in terms of observational and/or interventional distributions.
% : (1) pre-treatment network and attributes and (2) neighborhood interference.}

Next, we make two commonly adopted assumptions in network interference settings.
\begin{assumption}[Pre-treatment network]\label{assum:pre}
The network {\small $\gG$} along with node attributes {\small $\rmX$} and edge attributes {\small $\rmZ$} are measured before treatment assignments {\small $\rvt=\bm{\pi}$} and treatments are not mutable.    
\end{assumption}
\begin{assumption}[Neighborhood Interference]\label{assum:neigh-int}
The counterfactual outcome of a unit depends only on its immediate neighborhood treatments, i.e., {\small $\ry_i(\rt_i=\pi_i, \rvt_{-i}=\bm{\pi}_{-i})|\rvc_i = \ry_i(\rt_i=\pi_i, \rvt_{-i}^{\mathcal{N}_i}=\bm{\pi}_{-i}^{\mathcal{N}_i})|\rvc_i=\ry_i(\rt_i=\pi_i, \rvp_{i}=\phi_e(\bm{\pi}_{-i}^{\mathcal{N}_i}, \gG, \rmX, \rmZ))|\rvc_i$}, where {\small $\mathcal{N}_i=\{j:(v_i,v_j) \in \gE\}$}, {\small $\rvt_{-i}^{\mathcal{N}_i}=\rvt_{-i} \cap \{\rt_j: j \in \mathcal{N}_i\}$}, and {\small $\bm{\pi}_{-i}^{\mathcal{N}_i} = \bm{\pi}_{-i} \cap \{\pi_j: j \in \mathcal{N}_i\}$} denote neighborhood set, treatments, and assignments, respectively.
\end{assumption}
{Assumption \ref{assum:pre} is a general assumption in experimental and observational studies, 
% like A/B tests and prospective cohort studies in networks to facilitate identification of causal effects.
and Assumption \ref{assum:neigh-int} is a common simplifying assumption that presumes network influence is mediated by immediate neighbors but our work could be extended to consider interference from multiple-hop neighborhoods. For ease of exposition, we drop the superscript $\mathcal{N}_i$ in neighborhood treatments and assignments.}

For causal identification, we assume unconfoundedness, similar to previous work~\cite{ma-kdd22,wu-iclr25}:
\begin{assumption}[Unconfoundedness]\label{asum:unconfoundedness}
    \newchange{For all unit treatment {\small $\pi_i\in \{0,1\}$} and peer treatment assignments {\small $\bm{\pi}_{-i}\in\{0,1\}^{n-1}$}, there exists a feature mapping function {\small $\phi_f \in \Phi_{f}$} and an exposure mapping function {\small $\phi_e \in \Phi_{e}$} such that the counterfactual outcome is independent of unit treatment and peer exposure conditions given the context \small {$\rvc_i=\phi_f(v_i, \gG, \rmX, \rmZ)$, i.e., $\ry_i(\rt_i=\pi_i, \rvp_{i}=\phi_e(\bm{\pi}_{-i}, \gG, \rmX, \rmZ)) \perp \{\rt_i, \rvp_{i}\} | \rvc_i$}.}
\end{assumption}
\newchange{Assumption \ref{asum:unconfoundedness} implies that the observed network context is sufficient for controlling for confounding, and there are functions able to represent it compactly. Under this assumption, it is still possible to learn a feature mapping and exposure mapping functions that do not approximate the true functions which leads to a misspecification error. Therefore, it is important to learn an expressive function (e.g., a GNN) that is able to capture a wide range of possible functions. %The first condition is untestable, but the second can be achieved with expressive GNNs. Misspecification error occurs when learned functions fail to approximate the true ones.
We also assume the standard \textit{consistency} (Assumption \ref{asum:consistency}) and \textit{positivity} (Assumption \ref{asum:pos}), described in more detail in Appendix \ref{ap-problem-setup}.
Next, we present the causal identification conditions and formally define the problem of exposure mapping function learning in the context of peer effect estimation.}

{
\begin{proposition}\label{prop:estimation}
    With Assumptions \ref{assum:pre}-\ref{asum:pos}, the HPE $\delta_i$ in Eq. \ref{eq:peer_eff_exp_map} can be estimated from experimental or observational data as 
    {\small
\begin{equation}\label{eq:peer_exp_obs_main}
    \delta_i(\bm{\pi}_{-i}, \bm{\pi}'_{-i}) =\mathbb{E}[\ry_i|\rt_i=\pi_i, \rvp_{i}=\phi_e(\bm{\pi}_{-i}, \gG, \rmX, \rmZ), \rvc_i] -\mathbb{E}[\ry_i| \rt_i=\pi_i, \rvp_{i}=\phi_e(\bm{\pi'}_{-i}, \gG, \rmX, \rmZ),\rvc_i].
\end{equation}}
\end{proposition}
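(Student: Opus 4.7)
The plan is to derive the observational expression in Equation~\ref{eq:peer_exp_obs_main} by showing that, under Assumptions~\ref{assum:pre}--\ref{asum:pos}, each of the two counterfactual expectations in Equation~\ref{eq:peer_eff_exp_map} equals its observational counterpart; subtracting then yields the claim. Since the two terms have the same form with different peer treatment assignments, it suffices to show that for any assignment $\bm{\pi}_{-i}\in\{0,1\}^{n-1}$,
\[
\mathbb{E}[\ry_i(\rt_i=\pi_i,\, \rvp_{i}=\phi_e(\bm{\pi}_{-i}, \gG, \rmX, \rmZ))\mid \rvc_i] \;=\; \mathbb{E}[\ry_i \mid \rt_i=\pi_i,\, \rvp_{i}=\phi_e(\bm{\pi}_{-i}, \gG, \rmX, \rmZ),\, \rvc_i].
\]

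First, I would apply the neighborhood interference assumption (Assumption~\ref{assum:neigh-int}) to restrict $\bm{\pi}_{-i}$ to its neighborhood projection $\bm{\pi}_{-i}^{\mathcal{N}_i}$, rewriting the counterfactual as $\ry_i(\rt_i=\pi_i,\, \rvp_{i}=\phi_e(\bm{\pi}_{-i}^{\mathcal{N}_i}, \gG, \rmX, \rmZ))$. Second, I would use the unconfoundedness assumption (Assumption~\ref{asum:unconfoundedness}), which gives the conditional independence $\ry_i(\rt_i=\pi_i,\rvp_{i}=\bm{\rho}_i)\perp \{\rt_i,\rvp_i\}\mid \rvc_i$ for any realization $\bm{\rho}_i$, to augment the conditioning on $\rvc_i$ with the event $\{\rt_i=\pi_i,\rvp_i=\bm{\rho}_i\}$ without changing the value of the expectation. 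Third, consistency (the standard Assumption~\ref{asum:consistency} invoked in the proposition) identifies the potential outcome $\ry_i(\rt_i=\pi_i,\rvp_{i}=\bm{\rho}_i)$ with the observed $\ry_i$ on the event $\{\rt_i=\pi_i,\rvp_i=\bm{\rho}_i\}$, turning the counterfactual expectation into an observational one. Positivity (Assumption~\ref{asum:pos}) guarantees that $\Pr(\rt_i=\pi_i,\rvp_i=\bm{\rho}_i\mid \rvc_i)>0$ so that the resulting conditional expectation is well-defined. Substituting $\bm{\rho}_i=\phi_e(\bm{\pi}_{-i}^{\mathcal{N}_i},\gG,\rmX,\rmZ)$ and dropping the neighborhood superscript per the convention stated after Assumption~\ref{assum:neigh-int} gives the first term of Equation~\ref{eq:peer_exp_obs_main}; the identical chain of reasoning applied to $\bm{\pi}'_{-i}$ and subtraction yields the proposition.

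The main subtlety I expect lies in the unconfoundedness step: Assumption~\ref{asum:unconfoundedness} is stated as independence of the potential outcome from the joint $(\rt_i,\rvp_i)$ given $\rvc_i$, but the quantity $\rvp_i$ is itself a deterministic function of $\{\bm{\pi}_{-i},\gG,\rmX,\rmZ\}$, so care is needed to justify conditioning on $\rvc_i$ and a particular value of $\rvp_i$ simultaneously. I would explicitly argue that the $\phi_e$ appearing in the identification formula is the same one guaranteed by Assumption~\ref{asum:unconfoundedness}, and that Assumption~\ref{asum:pos} must provide positivity at the level of the joint event $\{\rt_i=\pi_i,\rvp_i=\bm{\rho}_i,\rvc_i\}$ for both $\bm{\rho}_i=\phi_e(\bm{\pi}_{-i},\cdot)$ and $\bm{\rho}'_i=\phi_e(\bm{\pi}'_{-i},\cdot)$; otherwise the target estimand is not identifiable from the observational distribution. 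Once this delicate bookkeeping is made explicit, the remaining steps are standard applications of the three building blocks (consistency, unconfoundedness, positivity) familiar from the do-calculus identification of heterogeneous treatment effects, lifted to the network interference setting by Assumption~\ref{assum:neigh-int}.
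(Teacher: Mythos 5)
Your proposal is correct and follows essentially the same route as the paper's own proof: unconfoundedness to introduce the conditioning on $\{\rt_i=\pi_i,\rvp_i=\bm{\rho}_i\}$, consistency to replace the counterfactual outcome with the observed one, and positivity to ensure the resulting conditional expectations are well-defined, with the pre-treatment and neighborhood-interference assumptions playing the same supporting roles. Your explicit attention to the subtlety of conditioning on a value of $\rvp_i$ that is deterministic in the contexts is a welcome refinement that the paper's proof glosses over.
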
}
The proof presented in Appendix \ref{ap-problem-setup} stems from consistency and unconfoundedness assumptions.
% {After applying standard causal inference assumptions of \textit{consistency}, \textit{positivity}, and \textit{unconfoundedness}, the HPE $\delta_i$ in Eq. \ref{eq:peer_eff_exp_map} can be estimated from experimental or observational data as follows.

% We defer the detailed explanation of \textit{consistency}, \textit{positivity}, and \textit{unconfoundedness} assumptions and derivation of Eq. \ref{eq:peer_exp_obs_main} to Appendix \ref{ap-problem-setup} due to limited space.}

% \textbf{Learning and Estimation}. Peer effects can be estimated using the network structure, node attributes, edge attributes, and treatments as inputs to learn two functions $\phi_f$ for contexts $\mathbf{C}_i$ and $\phi_e$ for peer exposure $\mathrm{P}_i$ and estimate two conditional expectations of the outcome. 
% \fixme{TODO: positivity and consistency assumption in main text and discuss how well these assumptions are satisfied??}Formally, the problem of exposure mapping function learning is defined in the context of peer effect estimation as follows.
{\begin{problem}[Exposure mapping function learning]
{
    Given network contexts {\small $\{\gG, \rmX, \rmZ\}$}, treatments $\rvt$, and outcomes $\rvy$ of $n$ units, estimate the feature and exposure mapping functions $\hat{\phi_f}$ and $\hat{\phi_e}$ along with counterfactual outcome model $\hat{f}_{y}$ such that mean squared error between true heterogeneous peer effect (HPE) $\delta_i$ and estimated HPE  $\hat{\delta_i}$, i.e., {\small  $\frac{1}{n}\sum_{i=1}^n(\delta_i - \hat{\delta_i})^2$}, is minimized, 
    % {\small
    % \begin{equation}
    %     \frac{1}{N}\sum_{i=1}^N(\tau_i - \hat{\tau_i})^2,
    % \end{equation}
    % }
    where {\small $\hat{\delta_i}=\hat{f_{y}}(\pi_i, \hat{\bm{\rho}}_i, \hat{\rvc}_i) - \hat{f}_{y}(\pi_i, \hat{\bm{\rho}}'_i, \hat{\rvc}_i)$} with {\small $\hat{\bm{\rho}}_i = \hat{\phi_e}(\bm{\pi}_{-i}, \gG, \rmX, \rmZ)$}, {\small $\hat{\bm{\rho}}'_i = \hat{\phi_e}(\bm{\pi}'_{-i}, \gG, \rmX, \rmZ)$}, and {\small $\hat{\rvc}_i=\hat{\phi}_f(v_i, \gG, \rmX, \rmZ)$}.}
\end{problem}
{The true HPE is unknown, but due Proposition \ref{prop:estimation}, the factual outcomes can be utilized to jointly estimate $\hat{\phi_f}$, $\hat{\phi_e}$, and  $\hat{f}_{Y_i}$ as discussed in the next section.}}

\section{\ourmodel: Learning Exposure Mapping Function with GNNs}

{Figure \ref{fig:framework} shows an overview of the proposed \ourmodel model to simultaneously learn exposure mapping function $\hat{\phi_e}$, feature mapping function $\hat{\phi_f}$, and counterfactual outcome model $\hat{f}_{y}$ for peer effect estimation.}
% \textit{First,} the attributed network $G(V, E, \mathbf{X}, \mathbf{Z})$ is passed through a message passing graph neural network (MPGNN) that approximates feature mapping function $\hat{\phi_f}$ to learn feature embedding $\mathbf{C}_i$ that captures confounders or effect modifiers. \textit{Second}, \ourmodel performs graph transformation to extract an ego network, for each node $v_i$, with peer treatments along with node $v_i$'s edge attributes and feature embeddings as node attributes. Then, node-level aggregations are performed to capture local neighborhood contexts. These contexts are passed through a masked weight layer and encoded by an multi-layer perceptron (MLP) to learn relevant influence mechanisms and summarized with graph-level aggregation with a graph readout function. The learned peer exposure embeddings ($\bm{\rho}_i$), along with the feature embeddings ($\mathbf{C}_i$), treatment ($T_i$), and outcomes ($Y_i$), are passed to a train a counterfactual outcome model.
We aim to learn exposure mapping function $\hat{\phi}_e$ with three key properties: 1) expressiveness, 2) invariance, and 3) bounded and balanced representation. The expressiveness property ensures the peer exposure representation $\bm{\rho}_i$ returned by the function $\hat{\phi}_e$ is unique for different relevant contexts, while the invariance property assures the representation $\bm{\rho}_i$ does not vary due to irrelevant contexts. \newchange{For example, in Figure \ref{fig:illustration}, if the underlying peer influence depends on clustering coefficients among treated, the function $\hat{\phi}_e$ is expressive if it can capture the first closed triad substructure.} \newchange{The standard message passing GNNs (e.g., GCN, GIN, etc) cannot capture essential causal network motifs like closed triads (i.e., triangular motifs)~\cite{chen-neurips20}. The graph transformation and automated exposure mapping function learning in our \ourmodel model are designed to ensure that the peer exposure representation is at least as expressive as or superior to the approach of feature extraction by counting causal network motifs.} In the above example, the function $\hat{\phi}_e$ is invariant to irrelevant contexts if the difference in other features like node attributes and edge weights do not change the learned representation $\bm{\rho}_i$. 
To satisfy the property of bounded representation, the learned representation $\bm{\rho}_i$ should be bounded, e.g., between 0 and 1, to reflect no exposure and maximum exposure. \newchange{Moreover, the representation $\bm{\rho}_i$ should have a substantial coverage, which means it should be distributed across the possible range of exposure. Next, we describe our feature mapping, exposure mapping, and counterfactual outcome model in detail.}

\subsection{Architecture of \ourmodel}
\ourmodel first maps the attributed network to feature embedding using a MPGNN and extracts ego networks for each node $v_i$, incorporating peer treatments, node features, and edge attributes. It performs node-level aggregation to capture local context, which is processed through a masked weight layer and an MLP followed by graph-level aggregation to learn peer exposure representation.

\textbf{Feature mapping MPGNN}.
{The feature mapping module aims to capture contexts that are potentially confounders or effect modifiers. 
% Capturing confounders ensures the estimates are unbiased and valid, while capturing effect modifiers reduces error in unit-level causal effect estimates. GNNs are shown to be suitable for capturing such contexts in network settings~\cite{guo-wsdm20,jiang-cikm22,adhikari-mlj24}. 
Let $\Theta$ denote a multi-layer perceptron (MLP) and {\small $||$} denote a concatenation operator. The feature embedding $\vc_i$ is obtained for $l$-th layer as:
{\small
\begin{equation}\label{eq:fmap}
    \vc_i = \Theta_0(\mX_i) || \vh_i^l\text{ and }\vh_i^l = \vh_i^{l-1} + \sum_{j \in \mathcal{N}_i}\Theta_{l}\vh^{l-1}_j,
\end{equation}
}where {\small $\vh^0_j = \mX_j || \mZ_{ij}$}, and {\small $\vh^0_i=0$} are initial conditions and {\small $\mathcal{N}_i$} denote neighbors of node $v_i$. This MPGNN architecture incorporates edge attributes {\small $\mZ_{ij}$} while disentangling the hidden representation of the unit's own attributes {\small $\Theta_0(\mX_i)$} from that of aggregated peer and edge attributes {\small $\vh_i^l$}.}

\textbf{Ego network construction}. {To learn an exposure mapping function that is as least as expressive as or superior to the approach of feature extraction by counting network motifs, we transform the node regression task to graph regression by extracting ego networks for each unit. In an ego network, the triangle structures involving an ego node are transformed as edges, which mitigates the limitation of GNNs to capture closed triad motifs.}
The ego network {\small $\Bar{\gG_i}(\Bar{\gV_i}, \Bar{\gE_i})$} is extracted from {\small $\gG(\gV,\gE)$} for each node $v_i$ such that node set {\small $\Bar{\gV_i}$} consists neighbors of $v_i$, i.e., {\small $\Bar{\gV_i} = \{v_j: e_{ij} \in \gE \wedge v_j \in \gV\}$} and edge set {\small $\Bar{\gE_i}$} consists edges between neighbors of $v_i$, i.e., {\small $\Bar{\gE_i} = \{e_{jk} : e_{jk} \in \gE \wedge v_j \in \Bar{\gV_i} \wedge v_k \in \Bar{\gV_i}\}$}.

{\textbf{Feature encoder and node aggregation}. Feature encoder module takes relevant peer feature embeddings and the distance between ego and peer feature embeddings, i.e., {\small $\vc_{ij} = \Theta_{feat}(\vc_j || (\vc_i - \vc_j)^2)$}, to capture peer influence mechanisms involving peer attributes and feature similarity between ego and peers.}
{Then, we transform an ego $v_i$'s edge attributes {\small $\mZ_{ij}$} to node attributes, i.e., {\small $\Bar{\mX}_{j} = \mZ_{ij}$}, in the ego network {\small $\Bar{\gG_i}(\Bar{\gV_i}, \Bar{\gE_i})$} because the ego $v_i$ itself is not present in the ego network. 
% Next, node attribute $\Bar{X}_j$ of node $v_j \in \Bar{V_i}$ in the ego network is set using edge attributes of ego node $v_i$ and peer $v_j$, i.e., $\Bar{X_{j}} = Z_{ij}$.
The node aggregation for each node $v_j$ in the ego network {\small $\Bar{\gG_i}$} considers neighbors' node attributes $\Bar{\mX}_k$, feature encoding $\rvc_{ik}$, edge attributes $\mZ_{jk}$, and peer treatments $t_k$, and is defined  for $l^{th}$ layer as follows:
{\small
\begin{equation}\label{eq:node-agg}
    \vh^l_j= \vh^{l-1}_j + \sum_{k \in \mathcal{N}_j} \vh^{l-1}_k, \text{ with }\vh^0_k = t_k || \Bar{\mX}_k || \vc_{ik} || \mZ_{jk}  \text{ and } \vh^0_j=0.
\end{equation}}
}\textbf{Masked weights and exposure encoder}. Masked weights promotes representation that is invariant to irrelevant contexts and feeds the concatenation of node attributes and hidden state after $L$ layers of node aggregation, i.e., {\small $\vh_j^{agg} = \Bar{\mathbf{X}}_j || \vc_{ij} || \vh_j^L$}, through a \textit{masked fully connected} layer as follows:
{\small
\begin{equation}
    \vh_j^{mask} = ReLU((\sigma(\mathbf{W}_{mask})\odot\mathbf{W}_{agg})\vh_j^{agg} + \mathbf{b}_{agg}),
\end{equation}
}
where {\small $ReLU$} and {\small $\sigma$} are a rectified linear unit and sigmoid activation functions, $\odot$ indicates element-wise product, {\small $\mathbf{W}_{mask}$ and $\mathbf{W}_{agg}$} are the weight matrices, and {\small $\mathbf{b}_{agg}$} is the bias vector.
{The masked hidden representation $\vh^{mask}_j$ is passed into an exposure encoder MLP to extract a low dimensional embedding. The goal of this module is to capture complex mechanisms based on the local neighborhood and reduce dimensionality. Formally, the output embedding {\small $\vh^{exp}_j$} is obtained as follows:
{\small
\begin{equation}
    \vh^{exp}_j = ReLU(\Theta_{exp}(ln(ReLU(\Theta_{enc}(\vh^{mask}_j))+1))),
\end{equation}
}
{\small $\Theta_{enc}$} and {\small $\Theta_{exp}$} are two MLPs and $ln$ denotes log transformation that offers the benefit of rescaling features with large values that are significant in scale-free networks (e.g., online social networks) and introduces inductive bias to capture mechanisms involving ratios.}
% Here, the intermediate layer uses $tanh$ activation function to identify mechanisms that may involve proportions (i.e., multiplication or division) and $tanh$ helps the subsequent MLP to learn it by bounding the input.

\textbf{Graph readout}.
Finally, the peer exposure embedding $\bm{\rho}_i$ for node $v_i$ is obtained by aggregating the representation $\vh_j^{exp}$ for all $v_j \in \Bar{\gV_i}$ on the entire ego network {\small $\Bar{\gG_i}(\Bar{\gV_i}, \Bar{\gE_i})$} as 
% {\small
% \begin{equation}
   {\small $\bm{\rho}_i={\sum_j(t_j \times \vh^{exp}_j)}/{\sum_j \vh^{exp}_j} || 1 - e^{-\sum_j(t_j \times \vh^{exp}_j)}$}. 
% \end{equation}
% }
We consider two aggregations such that the peer exposure embedding is bounded between $0$ and $1$, with $0$ being the case of no peer exposure. The first aggregation \newchange{captures proportion} similar to the fraction of treated peers, but we weight each peer by ${\vh^{exp}_j}/{\sum_j \vh^{exp}_j}$ learned by the preceding layer. The second aggregation \newchange{captures scale and} is analogous to the number of treated peers, except that each peer is weighted by $\vh^{exp}_j$.

% \fixme{TODO: TLearner connection

\subsection{End-to-end Learning of \ourmodel}
% Here we describe the end-to-end learning framework of \ourmodel.
\newchange{The resulting peer exposure embeddings  ($\bm{\rho}_i$) and the feature embeddings ($\vc_i$) from the above module along with unit treatment ($\pi_i$) are passed to a counterfactual outcome model $f_y(\rt_i=\pi_i, \rvp_i=\bm{\rho}_i, \rvc_i=\vc_i)$ to obtain conditional counterfactual outcome $\E[\ry_i(\rt_i=\pi_i,\rvp_i=\bm{\rho}_i)|\rvc_i=\vc_i]=\E[\ry_i|\rt_i=\pi_i,\rvp_i=\bm{\rho}_i,\rvc_i=\vc_i]$ (Eq. \ref{eq:peer_exp_obs_main}). We adapt the Treatment Agnostic Representation Network (TARNet) and Counterfactual Regression (CFR) models~\cite{shalit-icml17} as the counterfactual outcome model $\hat{f_y}$. The TARNet architecture consists of a single embedding MLP and two prediction heads to estimate counterfactual outcomes with unit treatment $\rt_i=1$ and $\rt_i=0$, i.e.,
{\small
\begin{equation}
    \vh^{emb}_i = \Theta_{emb}(\vc_i)||\bm{\rho}_i,~~~~ \hat{\ry}_i(0)=\Theta_{y_0}(\vh^{emb}_i),~~~~ \hat{\ry}_i(1)=\Theta_{y_1}(\vh^{emb}_i).
\end{equation}
} The CFR architecture is similar except for an autoencoder to produce the embeddings, i.e., {\small $\vh^{emb}_i = \Theta_{emb}(\vc_i||\bm{\rho}_i)$} and {\small $\vh^{out}_i=\Theta_{dec}(\vh_i^{emb})$}.
The CFR or TARNet model {\small $\hat{f}_{y}(\pi_i, \bm{\rho}_i, \vc_i)$} predicts outcome {\small $\hat{y}_i = \hat{y}_i(1)$} if {\small $\pi_i=1$} and {\small $\hat{y}_i = \hat{y}_i(0)$} if {\small $\pi_i=0$}. The unit-level factual prediction loss $\mathcal{L}_{y_i}$ is defined as
{\small
\begin{equation}
\begin{split}
        \mathcal{L}_{y_i} = loss(y_i, \hat{f}_{y}(\rt_i=\pi_i, \hat{\rvp_i}=\hat{\phi_e}(\bm{\pi}_{-i}, \gG, \mX, \mZ;\Theta_e), \hat{\rvc_i}=\hat{\phi_f}(v_i, \gG,\mX, \mZ;\Theta_f);\Theta_y)), 
        % \forall_{v_i \in V}, T_i=\pi_i \wedge T_{-i}=\bm{\pi}_{-i},
\end{split}
\end{equation}
}where $loss$ is an appropriate loss function (e.g., square error loss) based on data type of the outcome and $\bm{\Theta} = \{\Theta_e, \Theta_f, \Theta_y\}$ are learning parameters to be optimized for exposure mapping function $\hat{\phi_e}$, feature mapping function $\hat{\phi_f}$, and counterfactual outcome model $\hat{f}_{y}$, respectively.}

\newchange{\textbf{Balance loss}. The CFR architecture uses autoencoder reconstruction loss and the Integral Probability Metric (IPM)~\cite{shalit-icml17} measure of distance between treatment and control groups using Wasserstein~\cite{cuturi-icml14,arjovsky-icml17}, jointly referred to as balance loss, i.e.,
{\small
\begin{equation}
    \mathcal{L}_{bal} = \1_{\lambda_{bal} > 0} \times \frac{1}{n}\textstyle \sum_i(\vh^{out}_i - \vc_i||\bm{\rho}_i)^2 + \lambda_{bal} \times IPM(\{\vh^{emb}_i: t_i=1\}, \{\vh^{emb}_i: t_i=0\}),
\end{equation}
}
where $\lambda_{bal} \ge 0$ is a hyperparameter and $IPM(.)$ balances the distribution $\mathbb{P}(\rvc,\rvp|\rt=0)$ and $\mathbb{P}(\rvc,\rvp|\rt=0)$, where $\mathbb{P}(\rvc,\rvp|\rt)$ is equivalent to $\mathbb{P}(\rvp|\rt)\mathbb{P}(\rvc|\rvp,\rt)$. Intuitively, $\mathcal{L}_{bal}$ balances peer exposure distribution $\rvp$ between treatment groups and covariate distribution $\rvc$ across peer exposure conditions and treatment groups while maintaining expressiveness due to the autoencoder component.}

{For the end-to-end learning of $\hat{\phi}_e$,  $\hat{\phi}_f$, and $\hat{f}_{Y_i}$, we introduce three custom loss functions designed for \ourmodel: \newchange{coverage loss}, sparsity loss, and entropy loss. These custom loss functions serve as priors to make the learned exposure mapping function stable and reliable.}

% \textbf{TARNet outcome prediction loss}. This loss function minimizes the MSE error between predicted outcome and observed outcome, i.e.,
% $L_{pred} = (Y_i - \hat{Y_i})^2$, where $\hat{Y_i} = Y(1, \bm{\rho}_i) \text{ if } T_i=1 \text{ else } Y(0, \bm{\rho}_i)$.

\newchange{\textbf{Coverage loss}.} We use a prior that encourages \newchange{the bounded peer exposure embedding to have substantial coverage.} This loss function checks how far the learned peer embedding distribution is from a continuous uniform distribution between $0$ and $1$, i.e.,
$
L_{cov} = (mean(\bm{\rho}) - 0.5)^2 + (var(\bm{\rho}) - \frac{1}{12})^2 + (range(\bm{\rho}) - 1)^2$. Here, we consider mean squared error of mean, variance, and range of learned embedding $\bm{\rho}$ against corresponding value of the uniform distribution.

{\textbf{Entropy loss and sparsity loss}. Entropy loss encourages mask weights, i.e., $p:=\sigma(\mathbf{W}_{mask})$ to take values toward $0$ or $1$ and sparsity loss pushes for a few weights with high values. Formally, we define entropy loss and sparsity loss as $\mathcal{L}_{ent}=mean(-p log(p) - (1-p) log(1-p))$ and $\mathcal{L}_{sp}=mean(p)$.}
% \textbf{Bound loss}. For reliability, we use a prior that peer effects for the instances with no exposure are zero. This loss function checks if peer effects for the instances with $\bm{\rho}_i=0$ are zero, i.e., $L_{bound} = (Y_i(\pi_i,\bm{\rho}_i) - Y_i(\pi_i,\bm{\Vec{0}}))^2$ if $\bm{\rho}_i=0$ else $0$. This is required for the reliability of the \ourmodel framework and for preserving the interpretation that $\bm{\rho}_i=0$ means no peer exposure. Notice the second term $Y_i(\pi_i,\bm{\Vec{0}})$ represents a counterfactual setting with no peer exposure for all units and it is a significant distribution shift from the observed peer exposure conditions. This loss function aims to mitigate the effect of distribution shifts.

\textbf{Overall loss}. {We obtain the overall loss function $\mathcal{L}$ as
\begin{equation}
    \mathcal{L}=\frac{1}{n}\textstyle \sum_i\mathcal{L}_{y_i} + \mathcal{L}_{bal}+ \lambda_{cov} \times \mathcal{L}_{cov} + \lambda_{ent} \times \mathcal{L}_{ent} + \lambda_{sp} \times \mathcal{L}_{sp} + \lambda_{L1} \times ||\bm{\Theta}_{gnn}||_1,
\end{equation}
where $\lambda_{cov}$, $\lambda_{ent}$, and $\lambda_{sp}$ are the hyperparameters and $\bm{\Theta}_{gnn}$ denote overall parameters in $\hat{\phi}_f$ and $\hat{\phi}_e$, and the last term is $L_1$ loss to promote invariance to irrelevant contexts by preferring sparse weights.}

\textbf{Inference}. The peer effect is obtained as {\small $\hat{\delta_i}(\bm{\pi}_{-i}, \bm{\pi}'_{-i}) = \hat{f}(\pi_i, \bm{\pi}_{-i}, \gG,\rmX, \rmZ) - \hat{f}(\pi_i, \bm{\pi}'_{-i}, \gG,\rmX, \rmZ) = \hat{f}_{y}(\pi_i, \bm{\rho}_i, \rvc_i) - \hat{f}_{y}(\pi_i, \bm{\rho}'_i, \rvc_i)$}, where $\hat{f}$ is the end-to-end \ourmodel.

\vspace{-1em}
\subsection{Theoretical Analyses of \ourmodel}

\textbf{Expressiveness}. We perform a theoretical analysis of the expressive power of graph neural networks (GNNs) in capturing the causal network motifs proposed in the \citet{yuan-www21} paper. Building on previous research regarding the capacity of GNNs to count substructures~\cite{chen-neurips20}, we demonstrate that existing message-passing GNN methods are not expressive enough to capture all causal network motifs. In contrast, our method is expressive to capture relevant causal network motifs. We defer the detailed theoretical framework and results to Appendix \ref{ap-theory}.

\textbf{Time complexity}. 
% The increased expressiveness of our model comes with the trade-off of increased runtime to process ego networks.
Our analysis of runtime complexity included in Appendix \ref{ap-theory-time} shows our method is, roughly on average, $\rho_{\gE} \times avg(\rvd)$ times more computationally expensive than standard MPGNNs, where $\rho_{\gE}$ is the average edge density and $avg(\rvd)$ is the average degree.

\newchange{\textbf{Misspecification errors}. We extend \citet{shalit-icml17}'s analyses of theoretical counterfactual prediction error bounds for the CFR model to study misspecification errors in the end-to-end \ourmodel using the sequential error decomposition trick in {Appendix \ref{ap-theory-error}.} By focusing on learning the expressive exposure mapping function, we are reducing its misspecification error directly.}

\vspace{-1em}
\section{Experiments and Results}
% Here, we first describe the datasets and experimental setup for the evaluation of \ourmodel. Then, we present the main takeaways from the results.
% \subsection{Dataset}
\vspace{-1em}
\subsection{Experimental Setup}
\label{sec:exp_setup}

\textbf{Dataset}. Similar to other works in causal inference, we rely on synthetic and semi-synthetic data. We consider three synthetic network models with a fixed number of nodes ($N=3000$) with different data generating parameters and edge densities: (1) the Watts Strogatz (WS) network~\cite{watts-nature98}, which models small-world phenomena, (2) the Barab{\'a}si Albert (BA) network~\cite{albert-rmp02}, which models preferential attachment phenomena, and (3) the Stochastic Block Model (SBM) that models community structures. We control the density of edges for BA and WS networks and the number of communities in the SBM network.
% For the BA model, the preferential attachment parameter $m \in [1, 5, 10]$ is used to generate sparse to dense networks, where a new node connects to $m$ existing nodes to form the network. For the WS model, we set mean degree parameters $k \in \{0.002N, 0.005N, 0.01N\}$ with fixed rewiring probability of $0.5$, similar to prior works~\cite{yuan-www21,adhikari-mlj24}. For the SBM model, we use number of blocks parameters $b \in \{500, 200, 100\}$ with randomly generated edge probabilities within and across communities. 
We also use a real-world social networks BlogCatalog and Flickr with more realistic topology and attributes to generate treatments and outcomes. We defer additional details on data generation to Appendix \ref{ap-dataset}.

\textbf{Evaluation metrics}. 
To evaluate the performance of heterogeneous peer effect (HPE) estimation, we use the \textit{Precision in the Estimation of Heterogeneous Effects} ({\small ${\epsilon_{PEHE}}$})~\cite{hill-jcgs11} metric defined as {\small ${\epsilon_{PEHE}} = \sqrt{\frac{1}{n}\textstyle \sum_i (\delta_i(\bm{\pi}_{-i}, \bm{\pi}'_{-i}) - \hat{\delta_i}(\bm{\pi}_{-i}, \bm{\pi}'_{-i}))^2},$} where {\small $\delta_i(\bm{\pi}_{-i}, \bm{\pi}'_{-i})$} is true HPE and {\small $\hat{\delta_i}(\bm{\pi}_{-i}, \bm{\pi}'_{-i})$} is the estimated HPE, where $\bm{\pi}'_{-i}$ denotes a counterfactual scenario where treatments of peers are flipped. {\small $\epsilon_{PEHE}$} (lower better) measures the deviation of estimated HPEs from true HPEs. For each experimental result, we report mean and standard deviation of {\small $\epsilon_{PEHE}$} for $5$ different simulations.

\textbf{Baselines}. We compare \ourmodel with state-of-the-art (SOTA) peer estimation methods. NetEst~\cite{jiang-cikm22} and TNet~\cite{chen-icml24} use the fraction of treated peers as peer exposure but the estimator is based on adversarial learning and doubly robust method, respectively, for robustness. DWR~\cite{zhao-arxiv22} learns attention weights based on attribute similarity and 1GNN-HSIC~\cite{ma-aistats21} use GNNs to summarize peer treatments as heterogeneous contexts while using homogeneous exposure. 
% GNN-TARNet-Motifs serve as references to check whether the exposure mapping function learned by our method is as good as or better than manually extracted causal network motifs.
\newchange{We also use the recently proposed GNN- and autoencoder-based automated exposure mapping approach (AEMNet)~\cite{mao-icassp25} and GNN- and transformer-based CauGramer~\cite{wu-iclr25} as baselines for estimating peer effects in our setup.} We also consider INE-TARNet~\cite{adhikari-mlj24} adapted for peer effect estimation as a baseline, although it was developed for direct effect estimation. We include GNN-TARNet-Motifs approach that considers manually extracted causal network motifs~\cite{yuan-www21} as peer exposure and TARNet as estimator~\cite{shalit-icml17} as a strong baseline. We discuss hyperparameter tuning and model selection in Appendix \ref{ap-hyperparams}.

\vspace{-1em}
\subsection{Results}

% \begin{figure*}
%     \centering
%     \subfigure[Mutual Connections]{
%     \label{fig:mut_frns}
%     \includegraphics[width=0.48\linewidth]{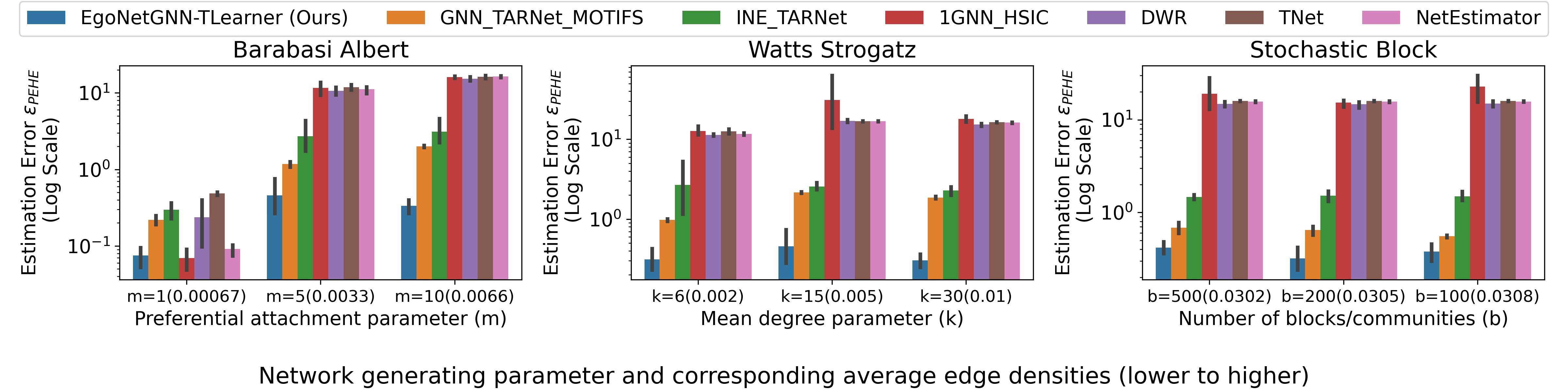}
%     }
%     \subfigure[Clustering Coefficient]{
%     \label{fig:cc}
%     \includegraphics[width=0.48\linewidth]{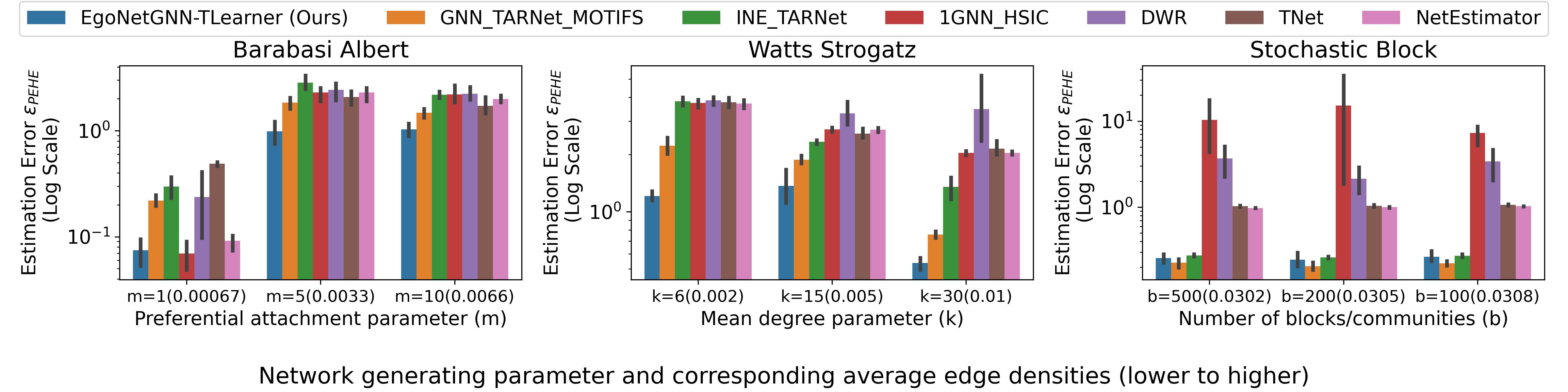}
%     }
%     \subfigure[Connected Components]{
%     \label{fig:div}
%     \includegraphics[width=0.48\linewidth]{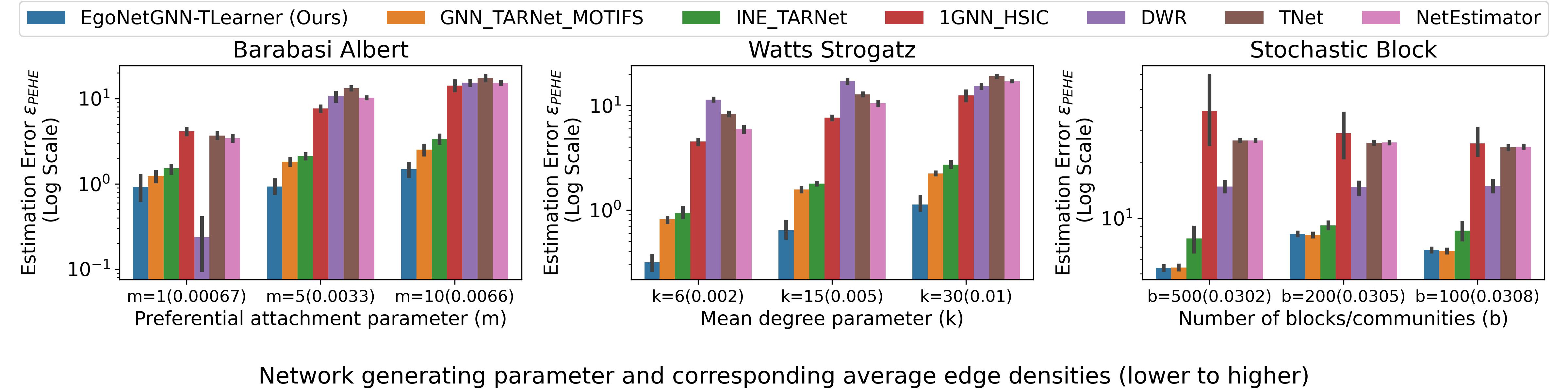}
%     }
%     \subfigure[Tie Strengths]{
%     \label{fig:tie}
%     \includegraphics[width=0.48\linewidth]{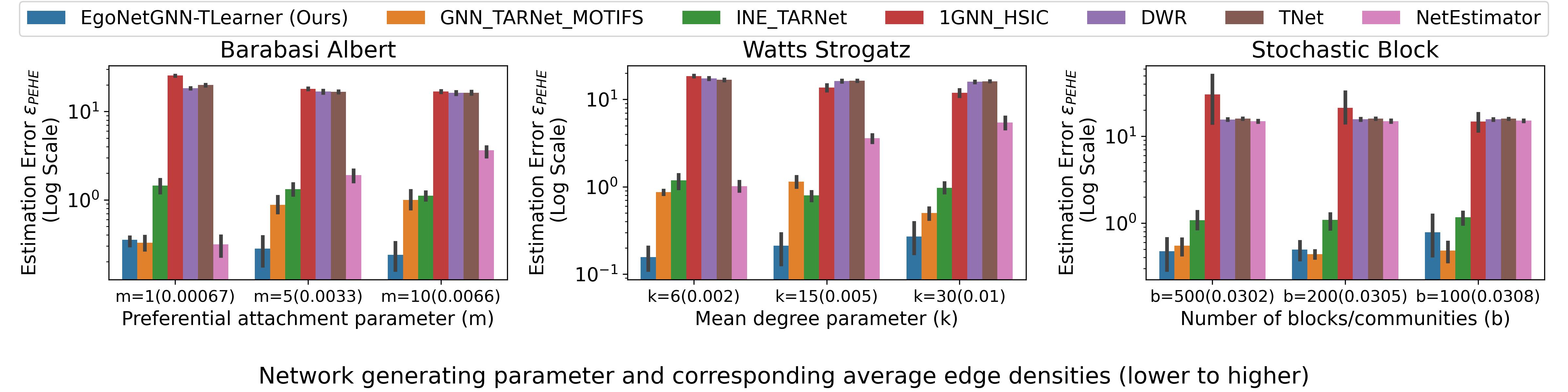}
%     }
%     \caption{Peer effect estimation error for different underlying influence mechanisms. When true peer exposure depends on the number of mutual connections, our method comfortably outperforms all baselines showing its capability to count triangles in the ego network. When true peer exposure depends on clustering coefficient among treated peers, our method is better than or competitive to motif-count based baseline when the underlying peer exposure mechanism can be explained by causal motif counts. When true peer exposure depends on tie strengths between treated peers, our method is better than the motif-count based baseline for WS and BA networks but competitive to it for the SB network. }
%     \label{fig:syn}
% \end{figure*}

\begin{figure*}[!t]
    \centering
    \vspace{-2mm}\includegraphics[width=0.99\linewidth]{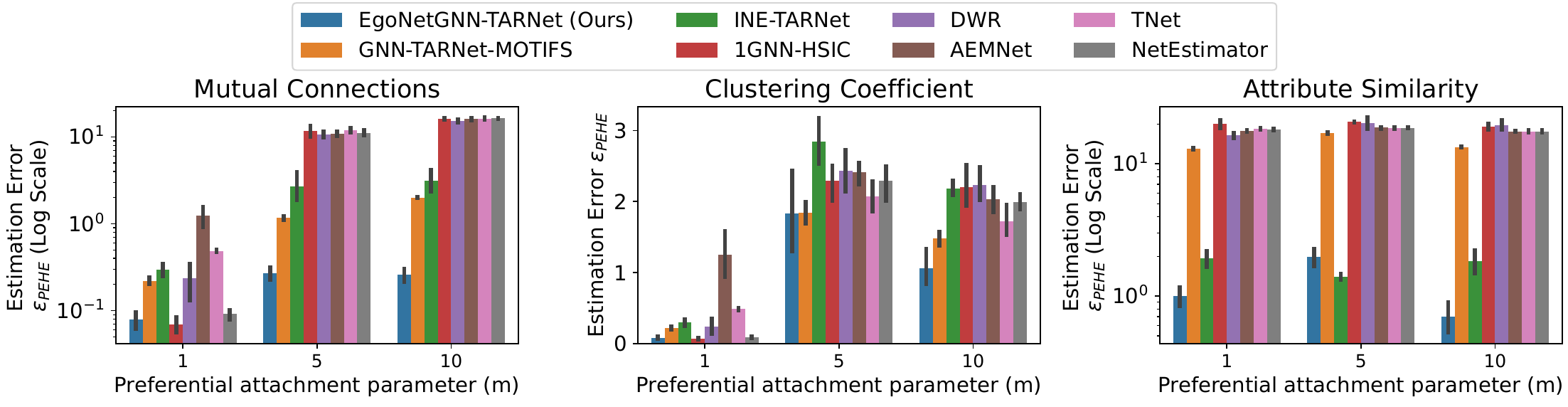}
    \vspace{-1em}
    \caption{Peer effect estimation error for \textbf{Barabasi Albert} network when true peer exposure depends on mutual connections, clustering coefficient, and attribute similarity. Our method shows robust performance across different underlying peer influence mechanisms and edge densities (low to high).}
    \label{fig:syn-ba-main}
    \vspace{-1em}
\end{figure*}

\begin{table}[!t]
    \caption{Mean and standard deviation of peer effect estimation error ($\epsilon_{PEHE}$) for different methods in BlogCatalog (BC) dataset for four settings when true peer exposure mechanisms depend on clustering coefficients, connected components, mutual connections, and attribute similarity.}
    \label{tab:exp-semi}
    \centering
\resizebox{\textwidth}{!}{
\begin{tabular}{p{5.1em}p{3em}p{3em}p{3em}p{3em}p{3em}p{3.1em}p{3.1em}p{3em}p{3em}p{3em}}
\toprule
{Mechanisms} & Ours-TARNet & \newchange{Ours-CFR} & GNN-Motifs &        INE-TARNet &          1GNN-HSIC &                DWR &             AEMNet &                TNet &       NetEst & \newchange{CauGramer}\\
\midrule
%  \multicolumn{9}{c}{$\epsilon_{PEHE}$ in the BlogCatalog Dataset}\\
% \midrule
Clus. Coef. &  {\small $\underline{1.59}_{\pm 0.4}$} &  {\small $\mathbf{1.51}_{\pm 0.3}$} &  {\small ${2.04}_{\pm 0.7}$} &  {\small $2.24_{\pm 0.6}$} &   {\small $5.83_{\pm 3.7}$} &   {\small $5.95_{\pm 1.8}$} &   {\small $4.60_{\pm 2.2}$} &    {\small $8.88_{\pm 9.5}$} &   {\small $3.47_{\pm 0.3}$} &   {\small $6.31_{\pm 2.1}$} \\
Con. Comp.    &  {\small $\underline{2.98}_{\pm 0.8}$} & {\small $\mathbf{2.77}_{\pm 0.9}$} &  {\small $4.41_{\pm 1.1}$} &  {\small ${3.93}_{\pm 0.9}$} &   {\small $6.83_{\pm 1.3}$} &   {\small $6.84_{\pm 1.4}$} &   {\small $8.05_{\pm 4.5}$} &  {\small $11.60_{\pm 11.5}$} &   {\small $7.38_{\pm 0.8}$} &   {\small $7.09_{\pm 1.0}$}\\
Mut. Con.      &  {\small $\underline{2.90}_{\pm 1.1}$} & {\small $\mathbf{2.57}_{\pm 0.8}$} &  {\small $3.83_{\pm 0.7}$} &  {\small ${3.37}_{\pm 0.8}$} &   {\small $9.29_{\pm 4.7}$} &   {\small $8.21_{\pm 3.3}$} &  {\small $11.71_{\pm 4.8}$} &  {\small $13.15_{\pm 12.4}$} &   {\small $6.62_{\pm 2.2}$} &   {\small $8.24_{\pm 1.4}$}\\
Attr. Sim.    &  {\small $\underline{5.65}_{\pm 0.7}$}  &  {\small $\mathbf{4.86}_{\pm 1.9}$} &  {\small $6.09_{\pm 0.2}$} &  {\small ${6.01}_{\pm 2.0}$} &  {\small $17.48_{\pm 9.1}$} &  {\small $15.69_{\pm 6.7}$} &  {\small $21.80_{\pm 9.3}$} &   {\small $15.70_{\pm 7.7}$} &  {\small $14.07_{\pm 5.8}$} &  {\small $7.34_{\pm 3.8}$} \\
% \midrule
%  \multicolumn{9}{c}{$\epsilon_{PEHE}$ in the Flickr Dataset}\\
% \midrule
% \multirow{4}{*}{FL} & Clus. Coef. &   {\small $\mathbf{4.93}_{\pm 1.6}$} &    {\small $5.34_{\pm 1.5}$} &   {\small $\underline{5.26}_{\pm 1.6}$} &   {\small $9.56_{\pm 4.9}$} &   {\small $9.51_{\pm 2.2}$} &    {\small $8.05_{\pm 5.5}$} &    {\small $9.75_{\pm 4.6}$} &   {\small $7.57_{\pm 1.3}$} \\
% & Con. Comp.    &   {\small $\mathbf{1.83}_{\pm 0.6}$} &    {\small $2.80_{\pm 1.2}$} &   {\small $\underline{1.85}_{\pm 0.7}$} &   {\small $3.36_{\pm 0.8}$} &   {\small $2.75_{\pm 0.6}$} &    {\small $4.69_{\pm 1.7}$} &    {\small $2.94_{\pm 0.9}$} &   {\small $2.67_{\pm 0.5}$} \\
% & Mut. Con.      &   {\small $\underline{2.38}_{\pm 1.3}$} &    {\small $2.55_{\pm 0.5}$} &   {\small $\mathbf{2.36}_{\pm 0.6}$} &   {\small $4.03_{\pm 1.6}$} &   {\small $3.57_{\pm 1.7}$} &  {\small $10.95_{\pm 12.3}$} &  {\small $10.96_{\pm 17.2}$} &   {\small $4.24_{\pm 1.8}$} \\
% & Attr. Sim.    &  {\small $\mathbf{11.32}_{\pm 6.6}$} &  {\small $13.15_{\pm 10.8}$} &  {\small $\underline{12.17}_{\pm 8.8}$} &  {\small $16.94_{\pm 8.1}$} &  {\small $18.03_{\pm 9.7}$} &  {\small $17.43_{\pm 10.0}$} &  {\small $23.09_{\pm 20.3}$} &  {\small $16.87_{\pm 7.8}$} \\

 \bottomrule
\end{tabular}
}
\vspace{-1em}
\end{table}

\begin{table}[!t]
    \caption{Mean and standard deviation HPE estimation error ($\epsilon_{PEHE}$ metric) for three variants of our method (original, without mask, and without feature encoder and mask) in the BlogCatalog (BC), Barabasi Albert (BA), and Watts Strogatz (WS) datasets for three true peer exposure mechanisms.}
    \label{tab:ablation}
    \centering
\resizebox{\textwidth}{!}{
\begin{tabular}{llllllllll}
\toprule
Mechanism & \multicolumn{3}{l}{Mutual Connections} & \multicolumn{3}{l}{Clustering Coefficient} & \multicolumn{3}{l}{Attribute Similarity} \\
Network &        BC & BA & WS &            BC & BA & WS &          BC & BA & WS \\
Model Variants                  &                    &                 &                &                        &                 &                &                      &                 &                \\
\midrule
Ours       &  {\small $\mathbf{2.90}_{\pm 1.1}$} &  {\small $\mathbf{0.20}_{\pm 0.1}$} &  {\small $\mathbf{0.30}_{\pm 0.1}$} &  {\small $1.59_{\pm 0.4}$} &  {\small $0.99_{\pm 0.9}$} &  {\small $\mathbf{1.18}_{\pm 0.8}$} &  {\small $\mathbf{5.65}_{\pm 0.7}$} &   {\small $\mathbf{1.23}_{\pm 0.7}$} &   {\small $\mathbf{1.09}_{\pm 1.1}$} \\
Ours (-mask)   &  {\small $2.91_{\pm 1.5}$} &  {\small $0.21_{\pm 0.1}$} &  {\small $0.35_{\pm 0.2}$} &  {\small $1.98_{\pm 1.1}$} &  {\small $1.01_{\pm 0.8}$} &  {\small $1.20_{\pm 0.4}$} &  {\small $6.17_{\pm 0.6}$} &   {\small $2.37_{\pm 2.2}$} &   {\small $1.29_{\pm 1.8}$} \\
Ours (-feat\&mask) &  {\small $\mathbf{2.90}_{\pm 0.7}$} &  {\small $0.27_{\pm 0.2}$} &  {\small $0.31_{\pm 0.1}$} &  {\small $\mathbf{1.55}_{\pm 0.5}$} &  {\small $\mathbf{0.97}_{\pm 0.7}$} &  {\small $1.91_{\pm 1.3}$} &  {\small $5.99_{\pm 0.8}$} &  {\small $13.73_{\pm 2.8}$} &  {\small $13.85_{\pm 4.0}$} \\

 \bottomrule
\end{tabular}
}
\vspace{-1em}
\end{table}

\begin{table}[!t]
    \caption{\newchange{Evaluation of exposure representation, in terms of absolute correlation, in BlogCatalog data with no effect modification. The results for the learned peer exposure representation by our method is better (higher is better). We use the fraction of treated friends $z_i$ as baseline and the dimension of $\hat{\bm{\rho}},\hat{\bm{\rho}}' \in [0,1]^{d=2}$ with highest correlation is shown.}}
    \label{tab:direct_eval}
    \centering
    \resizebox{\textwidth}{!}{
\begin{tabular}{llllllllll}
\toprule
Corr.&Clus. Coef.&Con. Comp.&Mut. Con.&Attr. Sim.&Corr.&Clus. Coef.&Con. Comp.&Mut. Con.&Attr. Sim.\\
\midrule
$r(\hat{\rho},\rho)$& $\mathbf{0.81}_{\pm 0.1}$ &  $\mathbf{0.34}_{\pm 0.3}$  & $\mathbf{0.73}_{\pm 0.2}$ & $\mathbf{0.29}_{\pm 0.2}$ & $r(\hat{\rho}', \rho')$   & $\mathbf{0.85}_{\pm 0.02}$&$\mathbf{0.30}_{\pm 0.2}$ & $\mathbf{0.74}_{\pm 0.1}$ & $0.50_{\pm 0.1}$ \\
$r(z_i, \rho)$   & $0.17_{\pm 0.1}$&$0.12_{\pm 0.1}$ &$0.09_{\pm 0.03}$ & $0.28_{\pm 0.2}$ & $r(z_i', \rho')$   &$0.41_{\pm 0.2}$&$0.14_{\pm 0.1}$ & $0.09_{\pm 0.1}$ & $\mathbf{0.61}_{\pm 0.1}$ \\
\bottomrule
    \end{tabular}
}
\vspace{-1em}
\end{table}
% \begin{figure*}[!t]
%     \centering
%     \includegraphics[width=\linewidth]{images/mutual_connections_kdd.jpg}
%     \caption{{Peer effect estimation error when true peer exposure depends on number of mutual connections. Our method significantly outperforms all baselines showing its capability to count triangles in the ego network.}}
%     \label{fig:mut_frns}
% \end{figure*}
% \begin{figure*}[!t]
%     \centering
%     \includegraphics[width=\linewidth]{images/connected_components_kdd.jpg}
%     \caption{{Peer effect estimation error when true peer exposure depends on connected components among treated peers. Our method performs well compared to all baselines when underlying peer exposure mechanism cannot be explained totally with motifs structures only.}}
%     \label{fig:div}
% \end{figure*}
Next, we present results for experimental setups designed to answer \newchange{four} research questions (RQs).
\textbf{RQ1. How well do methods for peer effect estimation perform when peer exposure mechanisms depend on local neighborhood conditions?}
% We generate synthetic networks, BA and WS, with low, medium, and high edge density and SBM network with different block sizes.
In this setup, we evaluate the performance of peer effect estimators when the underlying peer exposure mechanism is unknown. We generate treatments and outcomes such that there is confounding due to a subset of node attributes and mean peer attributes. For the outcome generation, we consider five mechanisms for true peer exposure conditions where peer exposure is given by 1) the clustering coefficient between the treated peers, 2) the number of connected components among treated peers, and weighted fraction of treated peers with weights as 3) the square root of number of mutual connections, 4) attribute similarity, and 5) tie strength. Here, the unit's treatment acts as an effect modifier, where the peer exposure is doubled if the unit is treated.
%The coefficients scaling peer effects $\delta_{exp}$ and $\delta_{em}$ are set to $20$ for the first, second and fourth mechanisms and $1$ for the third mechanism because true peer exposure in the former case are bounded from 0 to 1 while the later one is unbounded. }
Figure \ref{fig:syn-ba-main} shows peer effect estimation error (y-axis) when true peer exposure mechanisms depend on mutual connections, clustering coefficient, and attribute similarity in Barabasi Albert networks with three network generation parameters (x-axis) resulting different edge densities (low to high). The preferential attachment parameter $m=1$ produces a sparse star-topology network, lacking cycles or triangular structures. In this setting, all methods perform relatively well when peer exposure mechanisms depend on local structure because MPGNNs are expressive enough to capture star-shaped motifs. However, with increased edge density and more complex network topology, unlike our method, the baselines are not sufficiently expressive to capture underlying mechanisms and suffer significantly. The GNN-TARNet-Motifs (GTM) approach is expressive in capturing clustering coefficients, and both GTM and INE-TARNet approximate mutual connections. This is reflected in the performance, where GTM is competitive for the clustering coefficient peer exposure mechanism. \newchange{\ourmodel-TARNet outperforms the baselines except for INE-TARNet, which is competitive in a setting with the peer exposure mechanism dependent on attribute similarity. Figure \ref{fig:syn-ba-main} and other results in Appendix \ref{ap-result-synthetic} show that for unknown peer exposure mechanisms, our method is as expressive as or superior to the strongest baseline with significantly better performance for denser networks.}
% These results generalize to other peer exposure mechanisms, including tie strength, and synthetic networks, as shown in Appendix \ref{ap-result-synthetic}.

{\textbf{RQ2. How reliable are the models for heterogeneous peer effect estimation in more realistic scenario?}}
RQ2 investigates the performance of the models using more realistic semi-synthetic networks and node attributes. In addition to confounding and heterogeneous peer influence, there is more complex peer effect modification depending on whether the unit is treated and the values of the unit's attributes. Table \ref{tab:exp-semi} shows mean and standard deviation of peer effect estimation error ($\epsilon_{PEHE}$) for different methods in \newchange{BlogCatalog (BC) dataset for four settings when true peer exposure mechanisms depend on clustering coefficients, connected components, mutual connections, and attribute similarity. The results show the robustness of \ourmodel in a more realistic setting, where \ourmodel-CFR outperforms all baselines. \ourmodel-TARNet is competitive with the best model and it also outperforms the baselines. In this setup, peer effects are heterogeneous due to the interaction of peer exposure conditions and effect modifiers, and our method is able to approximate it better than the baselines. {Appendix} \ref{ap-result-semi} presents additional experiments for this setup including results for the Flickr dataset (Table \ref{tab:exp-semi-flickr}) which is more challenging for the baselines.}
% For the fifth mechanism, we consider peer exposure given by weighted fraction of treated peers with weights as peer degree since the semi-synthetic data lack edge attributes.

\textbf{RQ3. How do the components of \ourmodel contribute to its robustness in estimating peer effects?} We conduct ablation studies to assess the contributions of masked weights and the feature encoder MLP. Table \ref{tab:ablation} displays the performance of three variants of \ourmodel-TARNet (original, without the masked weights, and without the feature encoder and masked weights) across BlogCatalog (BC), Barabási Albert (BA), and Watts-Strogatz (WS) datasets. The results show that excluding masked weights can bias peer effect estimates due to the model's sensitivity to irrelevant contexts. Removing the feature encoder MLP limits EgoNetGNN's ability to capture mechanisms based on attribute similarity. However, certain peer exposure mechanisms relying on local structures perform better when irrelevant features are ignored. Overall, these findings demonstrate that the feature encoder MLP enhances expressiveness, while masked weights promote invariance to irrelevant contexts.
Additionally, we analyze the \ourmodel's sensitivity to the choices of peer exposure embedding dimension, \newchange{coverage} loss coefficient, and noisy networks in Appendix \ref{ap-result-ablation}.

\newchange{\textbf{RQ4. How well are underlying mechanisms captured by learned exposure mapping function?} In Table 3, we directly compare the (absolute) Pearson correlation coefficient $r$ (higher is better) between the learned peer exposure representation, $\hat{\bm{\rho}}$  and $\hat{\bm{\rho}'}$, and the actual peer exposure under four different mechanisms.  Compared to commonly used fraction of treated friends baseline, learned peer exposures are informative of true peer exposures for mechanisms involving local structure. 
% We note that the peer exposures could be informative even if they are not directly (linearly) correlated with true values (e.g., if they cluster similarly).
}

\vspace{-1em}
\section{Discussion, Limitations  \& Future Work}
% \vspace{-1em}
Our work motivates the problem of learning exposure mapping function for peer effect estimation and proposes \ourmodel for addressing unknown peer influence mechanisms involving local neighborhood conditions. Our theoretical analysis and experimental results demonstrate increased expressiveness of \ourmodel to capture complex local neighborhood exposure conditions. We have designed \ourmodel to promote invariance to irrelevant contexts, and output a low-dimensional peer exposure embedding with bounded and balanced representation to partially mitigate issue of potential violation of the positivity assumption with continuous treatment or exposure. The empirical results have shown the effectiveness of \ourmodel in many peer effect estimation settings.
 
 \textbf{Limitations \& Future Work. }Ensuring theoretical bounds for variance with complex GNNs for heterogeneous causal effect estimation is still a developing research area~\cite{khatami-arxiv24} and important future direction, but it is not within the scope of our current work. This work can be extended to incorporate other network effects like direct effects and total effects.
 % and generalized to incorporate contagion effects where peer treatments affect their outcomes which in turn affect a unit's outcome in another timestep. 
 The increased expressiveness and robust peer effect estimates of our model come with the trade-off of a slightly longer runtime to process ego networks. Future work could consider relaxing the assumption of interference from immediate peers while addressing the scalability. Our work relies upon a reliable attributed network as input, but future research should consider capturing expressive representations in noisy networks. {Appendix \ref{ap-discussion} discusses societal impacts, scalability, and plausibility of assumptions.}
 
 {\section*{Reproducibility Statement}} 

 To support reproducibility, we release the complete codebase and experimental procedures. For all the experiments, we have repeated them at least \textit{five} times. We provide the details of the data generation process (Sec. \ref{sec:exp_setup} and Appendix \ref{ap-dataset}). Appendix starts with an anonymous repository link containing the full source code. We provide the details of the configurations and setups for replicating our results in Appendix \ref{ap-hyperparams}.

\bibliographystyle{ACM-Reference-Format}
\bibliography{references}
\appendix
\section{Appendix}
Source code and documentation are available at: \url{https://anonymous.4open.science/r/EgoNetGNN-8D5C/}

\subsection{Discussion}\label{ap-discussion}

\textbf{Societal impacts}. The implications of our work include identifying unit-level peer effects and discovering subpopulations with heterogeneous peer effects. The potential societal impacts could include the development of targeted interventions or the identification of policies that enhance desired outcomes in social networks.

\textbf{Plausibility of neighborhood interference assumption}. Neighborhood interference (Assumption 2 in Sec. \ref{sec:problem_setup}) is a common simplifying assumption and can be realistic in situations where peer interference is mediated by immediate neighbors or diminishes quickly for non-immediate neighbors. However, there could be some situations where interference could occur between peers beyond immediate neighbors. If we assume such interference is mediated via immediate neighbors, then we could model it by stacking multiple exposure mapping function learning layers, where the subsequent layers would summarize the exposures of neighbors. Another alternative is to use the K-hop ego network with edge existence and/or hop distance as additional node features. The former approach may be more scalable than the latter one because the K-hop neighborhood can grow rapidly. Ideas from recent works to infer unknown interference structure~\cite{wu-iclr25,lin-packdd24} could be adopted in conjunction with our approach of learning expressive peer exposure representations. While we assume a reliable network structure is provided as input, our experiments with noisy networks reveal that \ourmodel performs reliably well with imperfect data.

\textbf{Plausibility of unconfoundedness assumption}. Following existing work in the intersection of causal reasoning and representation learning~\cite{shalit-icml17,shi-neurips19,ma-kdd22,wu-iclr25}, we assume causal identification conditions are met and focus on expressive representation learning to mitigate model misspecification errors. Unconfoundness is a strong and untestable assumption and requires sufficiency of observed network contexts and expressiveness of their representation. While we assume the sufficiency of observed contexts, we make an effort to satisfy the expressiveness of representation by considering all network contexts, like node attributes, edge attributes, and network structure. If the presence of unobserved confounding cannot be ruled out, alternative causal identification approaches like proximal causal inference~\cite{tchetgen-arxiv20} or double negative controls~\cite{miao-strf24}, front-door criteria~\cite{pearl-book09}, and instrumental variables~\cite{angrist-jasa96} should be considered. Although a randomized experiment can remove unobserved confounding between unit treatments and the outcome, peer exposure conditions may not be randomized directly, and confounding could exist even for experiments unless the unconfoundedness assumption is made and observed network contexts are controlled for. So, an interesting future direction could be to explore alternative identification conditions.

\textbf{Scalability}. Although \ourmodel is more expressive, it has additional computational costs. A few ways to address large runtime and/or memory usage could be sampling ego networks to reduce the training set or sampling the neighborhood within a K-hop ego network. In Appendix A.8 (Table 6), our experiments with a randomly augmented network show that the performance does not degrade significantly for our method with the removal of edges. From an implementation point of view, we can parallelize our framework easily to exploit the power of GPUs. More specifically, there are two components in our framework. The feature mapping GNN takes the entire network at once to learn an embedding with an L-layer GNN. Subsequently, EgoNetGNN batches B nodes with their neighbor nodes and a mapping of which edges belong to which node in the batch. This batching can be parallelized to improve the overall efficiency.  

% \vspace{-1em}
\subsection{Related Work}\label{ap:related-work}
{Research in causal inference under interference has focused on estimating three main causal effects of interest, referred to as network effects: direct effects induced by a unit's own treatment, peer effects induced by treatment of other units, and total effects induced by both the unit's and others' treatment~\cite{hudgens-jasa08}. These network effects are estimated as average effects (e.g.,~\cite{arbour-kdd16,ugander-kdd13}) for the entire population or as heterogeneous effects (e.g.,~\cite{forastiere-asa21,bargagli-aas25}) for specific subpopulations or contexts. Our work focuses on heterogeneous peer effect estimation.
Most methods for estimating heterogeneous or individual-level causal effects under interference, including peer effects, assume peer exposure is binary~\cite{bargagli-aas25} or homogeneous, e.g., based on fraction of treated peers~\cite{jiang-cikm22,ogburn-jasa22,cai-cikm23,chen-icml24}. These methods assume a homogeneous or known exposure mapping function and focus on enhancing network effect estimation by adapting techniques like adversarial training~\cite{jiang-cikm22}, propensity score reweighting~\cite{cai-cikm23}, double machine learning~\cite{khatami-arxiv24}, \newchange{doubly robust estimation~\cite{leung-arxiv22}, targeted maximum likelihood estimate~\cite{ogburn-jasa22}, and targeted learning~\cite{chen-icml24}.}}
% {These methods adapt techniques like adversarial training~\cite{jiang-cikm22}, propensity score reweighting~\cite{cai-cikm23}, and doubly robust estimation via targeted learning~\cite{chen-icml24} for causal effect estimation in homogeneous interference settings.} 

{Recent research has looked into more complex functions of peer exposure, allowing for heterogeneous peer influence, in which different peers can have varying degrees of influence. Some of these works refer to heterogeneous peer influence as heterogeneous interference~\cite{qu-arxiv21,zhao-arxiv22,lin-ecmlkdd23}. \citet{forastiere-asa21} considered peer exposure as a weighted fraction of treated peers using known edge attributes as weights.  \citet{lin-ecmlkdd23} consider heterogeneity due to multiple entities types and \citet{qu-arxiv21} considered heterogeneity due to known node attributes for defining peer exposure. \citet{tran-aaai22} studied peer effect estimation with linear threshold peer exposure model but different unit-level threshold could be vary for different units capturing heterogeneous susceptibilities to the influence. ~\citet{zhao-arxiv22} used attention weights derived based on the similarities of the units' covariates to determine peer exposure as the weighted sum of treated peers. \citet{yuan-www21} capture peer exposure with features based on counts of different causal network motifs, i.e., recurrent subgraphs in a unit's ego network with treatment assignments as attributes. \citet{ma-aistats21} consider homogeneous peer exposure based on fraction of treated peers but they summarize the covariates of treated peers using a graph neural network (GNN) to capture heterogeneous contexts involving treatment assignments. Unlike our work, none of these studies has explicitly studied the issue of automatically learning the exposure mapping functions to define peer exposure representation while capturing the underlying influence mechanisms.}

\citet{ma-aistats21} learn heterogeneous contexts based on peer treatments but not the exposure mapping function or the peer exposure representation.
\citet{zhao-arxiv22} obtain single-dimension peer exposure embedding using a weighted sum of treated peers with attention weights derived from the cosine similarity of feature embeddings. Although \citet{zhao-arxiv22} use attention weights to define peer exposure, they assume a specific exposure mapping function, and it cannot adapt according to the underlying peer influence mechanism.
\citet{adhikari-mlj24} use GNNs to learn peer exposure embedding by addressing unknown peer influence mechanisms, but their scope is limited to direct effect estimation, i.e., the effect of a unit's own treatment. Specifically, \citet{adhikari-mlj24} learn a multi-dimensional peer exposure embedding using a weighted fraction of treated peers with feature embeddings and a second-order adjacency matrix as weights. \citet{ma-kdd22} employ similar method like \citet{ma-aistats21} for hypergraphs to model heterogeneity due to model group interactions. The idea is to learn a summary function and representation equivalent to the exposure mapping function and peer exposure using a hypergraph convolution network and attention mechanism. However, they assume the learned representation is expressive enough to capture the underlying influence mechanism. In this work, we do not make such an assumption and evaluate how well the learned peer exposure representation captures the underlying influence mechanisms.

% Recently, graph neural networks (GNNs) have been widely utilized for estimating causal effects in networks~\cite{jiang-cikm22,cai-cikm23,chen-icml24,khatami-arxiv24}; however, their application has largely been confined to addressing confounding specific to networks (e.g., due to latent homophily, a tendency of similar units to be connected~\cite{cristali-neurips22}). Our work explores the potential of GNNs to learn exposure mapping functions with the goal of capturing underlying influence mechanisms due to local neighborhood structures. Prior research~\cite{xu-iclr18,chen-neurips20} on the expressiveness of GNNs has shown popular message-passing GNNs lack expressiveness to count subgraphs. On the other hand, counts of causal network motifs are rich features that could capture underlying influence mechanisms due to local neighborhood structure~\cite{yuan-www21}. Counting such subgraphs can be computationally expensive, and they may not be able to capture every local structure. We design \ourmodel to excel in counting attributed triangle subgraphs, enhancing its expressiveness to capture underlying mechanisms involving  neighborhood structure.

Neural networks (NNs)~\cite{shalit-icml17,im-arxiv21,shi-neurips19} and, recently, graph neural networks (GNNs)~\cite{jiang-cikm22,cai-cikm23,chen-icml24,khatami-arxiv24} have been widely utilized for end-to-end learning of \textit{feature mapping function} and \textit{counterfactual outcome model} or \textit{effect estimator}. 
A feature mapping function maps raw features to feature embedding to capture potential confounders and effect modifiers. A counterfactual outcome model~\cite{shalit-icml17,ma-aistats21} predicts counterfactual outcomes for different levels of treatment, while an effect estimator~\cite{shi-neurips19,chen-icml24} directly learns the causal effect of interest. Only a few studies have considered learning the exposure mapping function~\cite{mao-icassp25} or peer exposure embedding~\cite{adhikari-mlj24,zhao-arxiv22}. 
% \citet{adhikari-mlj24} learn a multi-dimensional peer exposure embedding using a weighted fraction of treated peers with feature embeddings and a second-order adjacency matrix as weights. \citet{zhao-arxiv22} obtain single-dimension peer exposure embedding using a weighted sum of treated peers with attention weights derived from the cosine similarity of feature embeddings.
% Recently, graph neural networks (GNNs) have been widely utilized for estimating causal effects in networks~\cite{jiang-cikm22,cai-cikm23,chen-icml24,khatami-arxiv24}, but their use has largely been confined to addressing confounding specific to networks (e.g., due to latent homophily, a tendency of similar units to be connected~\cite{cristali-neurips22}). 
% Our work explores the potential of GNNs to learn exposure mapping functions with the goal of capturing unknown underlying influence mechanisms including mechanisms involving local neighborhood structures.
\newchange{\citet{lin-packdd24} consider a setting with an unknown network and interference structure and propose an approach to first infer network structure and represent peer exposure for direct effect estimation. Unlike their work, our settings focus on peer effect estimation with observed network structure but unknown peer exposure mechanisms that manifest due to local neighborhood contexts.}

\newchange{\citet{savje-biometrika24} advocates for interpretable but possibly misspecified exposure mappings and characterizes causal estimation errors due to misspecified exposure mappings, but follow-up research~\cite{auerbach-arxiv24} has highlighted the importance of capturing underlying interference mechanisms in policymaking.} 
\newchange{More recently, \citet{mao-icassp25} have explored the use of GNNs with autoencoders and clustering to learn discrete exposure conditions and their probabilities, aiming to estimate overall causal effects in networks. Similarly, \citet{wu-iclr25} utilize GNNs with Transformers to model unknown interference from K-hop neighborhood. Their identifiability assumption relies on capturing unit and peer covariates, while our identifiability assumption relies on capturing all attributed network contexts, including structure and edge attributes. These works use off-the-shelf message passing GNNs (like GCN and GIN) and lack expressiveness to capture mechanisms involving local neighborhood structure.} Prior research~\cite{xu-iclr18,chen-neurips20} on the expressiveness of GNNs has shown that popular GNN architectures lack expressiveness to count subgraphs. On the other hand, counts of subgraphs like causal network motifs are rich features that could capture underlying influence mechanisms due to local neighborhood structure~\cite{yuan-www21}. Counting such subgraphs can be computationally expensive, and they may not be able to capture every local structure. We design \ourmodel to excel in counting attributed triangle subgraphs, enhancing its expressiveness to capture underlying mechanisms involving  neighborhood contexts.
% Recent works solving a diverse set of problems have {implicitly or explicitly} addressed heterogeneous peer influence (HPI) due to known contexts~\cite{qu-arxiv21,forastiere-jasa21} or specific contexts~\cite{yuan-www21,ma-kdd22,tran-aaai22,zhao-arxiv22,lin-ecmlkdd23}.
% Some of these works refer to HPI as heterogeneous interference~\cite{qu-arxiv21,zhao-arxiv22,lin-ecmlkdd23}.
% \citet{tran-aaai22} study peer contagion effects with homogeneous influence but different unit-level susceptibilities to the influence.
% \citet{yuan-www21} capture peer exposure with causal network motifs, i.e., recurrent subgraphs in a unit's ego network with treatment assignments. \citet{ma-kdd22} focus on addressing heterogeneous influence due to group interactions utilizing hypergraphs. \citet{zhao-arxiv22} deal with heterogeneity due to node attribute similarity using attention weights to estimate peer exposure and causal effects. \citet{lin-ecmlkdd23} consider heterogeneity due to multiple entities and relationships in networks. \citet{ma-aistats21} summarize the covariates of treated peers using a graph neural network (GNN) to learn a peer exposure embedding in addition to homogeneous peer exposure. \citet{ma-kdd22} employ similar method but for hypergraphs to model group interactions.

\subsection{\newchange{Causal Inference Assumptions and Identification of Peer Effects}}\label{ap-problem-setup}

A fundamental prerequisite for causal identification is the consistency assumption, which enables equivalence among counterfactual, interventional, and factual outcomes. 
\begin{assumption}[Consistency under interference]\label{asum:consistency}
    The underlying outcome generation is independent of the treatment assignment mechanisms (i.e., hypothetical, experimental, or natural). For a unit $v_i$, if $\rt_i=\pi_i$ and $\rvt_{-i}=\bm{\pi}_{-i}$, then $\ry_i(\rt_i=\pi_i, \rvt_{-i}=\bm{\pi}_{-i}) = \ry_i$.
\end{assumption}

Positivity is another standard assumption in causal inference that requires every unit $v_i$ to have non-zero probability of being assigned every possible unit treatment and peer exposure conditions.
\begin{assumption}[Positivity]\label{asum:pos}
    There is a non-zero probability of unit treatment and peer exposure conditions for all possible contexts $\rvc_i$, i.e., $\mathbb{P}(\rt_i, \rvp_{i}|\rvc_i) > 0$, for every level of $\rt_i$ and $\rvp_i$, where $\mathbb{P}$ is the probability density function.
\end{assumption}

The proof of Proposition \ref{prop:estimation} is as follows.
\begin{proof}
    Our causal estimand of interest (Eq. 2) is as follows:
    \begin{equation*}
    % \begin{split}
    \label{eq:peer_eff_exp_map}
    \delta_i(\bm{\pi}_{-i}, \bm{\pi}'_{-i}) = \mathbb{E}[\ry_i(\rt_i=\pi_i, \rvp_{i}=\phi_e(\bm{\pi}_{-i}, \gG, \rmX, \rmZ))| \rvc_i] - \mathbb{E}[\ry_i(\rt_i=\pi_i, \rvp_{i}=\phi_e(\bm{\pi}'_{-i}, \gG, \rmX, \rmZ)) | \rvc_i].
    % \end{split}
\end{equation*}
Due to unconfoundedness assumption (Assumption \ref{asum:unconfoundedness}), unit treatment and peer exposure conditions are independent of counterfactual outcome conditioned on network contexts $\rvc_i$. This allows us to rewrite the estimand as:
\begin{equation*}
    \begin{split}
    \label{eq:peer_eff_exp_map}
    \delta_i(\bm{\pi}_{-i}, \bm{\pi}'_{-i}) = \mathbb{E}[\ry_i(\rt_i=\pi_i, \rvp_{i}=\phi_e(\bm{\pi}_{-i}, \gG, \rmX, \rmZ))| \rt_i=\pi_i, \rvp_{i}=\phi_e(\bm{\pi}_{-i}, \gG, \rmX, \rmZ)), \rvc_i] - \\\mathbb{E}[\ry_i(\rt_i=\pi_i, \rvp_{i}=\phi_e(\bm{\pi}'_{-i}, \gG, \rmX, \rmZ)) | \rt_i=\pi_i, \rvp_{i}=\phi_e(\bm{\pi}_{-i}, \gG, \rmX, \rmZ)), \rvc_i].
    \end{split}
\end{equation*}
Here, Assumption \ref{assum:pre} ensures introducing new terms related to treatment and peer exposure in the conditional does not affect existing set of contexts because they are measured pre-treatment. Similarly, Assumption \ref{assum:neigh-int} makes the sufficiency of learned representation requirement in unconfoundedness assumption more plausible.
Next, the consistency assumption allows replacing the counterfactual outcome with observed outcome, i.e., 
\begin{equation*}
    \begin{split}
    \label{eq:peer_eff_exp_map}
    \delta_i(\bm{\pi}_{-i}, \bm{\pi}'_{-i}) = \mathbb{E}[\ry_i| \rt_i=\pi_i, \rvp_{i}=\phi_e(\bm{\pi}_{-i}, \gG, \rmX, \rmZ)), \rvc_i] - \mathbb{E}[\ry_i | \rt_i=\pi_i, \rvp_{i}=\phi_e(\bm{\pi}_{-i}, \gG, \rmX, \rmZ)), \rvc_i].
    \end{split}
\end{equation*}
Assumption \ref{assum:pre} also ensures consistency assumption is satisfied because the treatments are not mutable.
This estimation above is tractable from observational or experimental data because of positivity assumption and the causal effects can be identified. 
\end{proof}

\subsection{Theoretical Analyses of \ourmodel}\label{ap-theory}
\subsubsection{Preliminaries}

\begin{figure*}
    \centering
    \begin{minipage}{.38\linewidth}
        \centering
        \includegraphics[width=\linewidth]{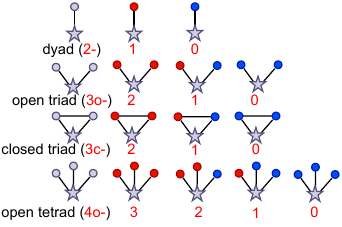}
        \caption{Example causal network motifs considered by \citet{yuan-www21}. Stars represent ego nodes and circles represent their peers. The red circles indicate treated nodes and blue circles indicate control nodes. The gray shapes indicate nodes that could either be treated or control. Here, the characters in red indicate a particular causal network motif (e.g., 3c-2 indicate closed triad with 2 treated peers).}
        \label{fig:motif-example}
    \end{minipage}% 
    \hfill
    \begin{minipage}{.57\linewidth}
        \includegraphics[width=\linewidth]{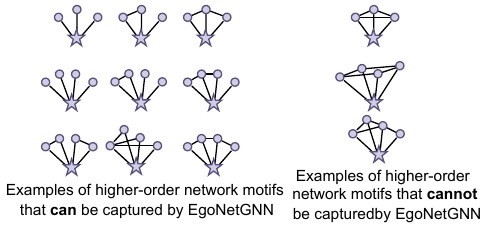}
        \caption{Examples of higher-order network motifs with four and five nodes. Stars represent ego nodes and circles represent their peers. The gray shapes indicate nodes with any treatment assignment. If the subgraph of a network motif, after removing edges connected to the ego node, forms a tree, then our model is expressive enough to capture the network motif and the corresponding causal network motifs. A network motif is a subgraph without any attributes, whereas a causal network motif is a subgraph that includes peer treatment assignments as attributes.}
        \label{fig:higher-order}
    \end{minipage}
    % \caption{Example (causal) network motifs. Stars represent ego nodes and circles represent their peers. The red circles indicate treated nodes and blue circles indicate control nodes. The gray shapes indicate nodes that could either be treated or control. A network motif is a subgraph without any attributes, whereas a causal network motif is a subgraph that includes peer treatment assignments as attributes. \sm{Is there any reason to put 4a and 4b put together? if not, may be we can simplify. also, the caption has mix of both a and b without separating}}
\end{figure*}
\textbf{Causal network motifs}. \citet{yuan-www21} proposed causal network motifs as important features to capture peer exposure accounting for local neighborhood conditions. Causal network motifs are attributed subgraphs with peer treatments as attributes. Figure \ref{fig:motif-example} shows four categories of causal network motifs: dyads, open triads, closed triads, and open tetrads. In the figure, stars represent ego nodes and circles represent their peers. The red circles indicate treated nodes and blue circles indicate control nodes. The gray shapes indicate nodes that could either be treated or control.

\textbf{Message passing graph neural networks (MPGNNs)}. The message-passing graph neural network (MPGNN) is a generic GNN model that incorporates several standard GNN architectures and relies on local aggregations of information within graphs~\cite{chen-neurips20}.  For a graph $G(V,E,\mathbf{X},\mathbf{Z})$, an MPGNN with $L$ layers is defined iteratively with aggregate function $AGG^l$ and update function $U^l$ as follows:
\begin{equation}
    h_i^l = U^l(h_i^{l-1}, AGG^l_{j \in \mathcal{N}_i}(\Theta^l(h_j^{l-1}, h_i^{l-1}, Z_{ij}))),
\end{equation}
where $\mathcal{N}_i$ denotes neighbors of unit $v_i$ and $\Theta^l$ denote learnable parameters like multi-layer perceptron. To obtain the hidden state at the $l^{th}$ layer, a local aggregation of the previous layer's hidden states ($h_j^{l-1}$ and $h_i^{l-1}$) and, optionally, edge attributes $Z_{ij}$ is performed and then combined with $h_i^{l-1}$. The hidden states are initialized as node attributes, i.e., $h^0_i=X_i$. Typically, in various GNN architectures, the update and aggregation functions are chosen as part of architecture design.

\textbf{Expressiveness of MPGNNs in counting substructures}. Here, we summarize the results obtained by ~\citet{chen-neurips20} that are relevant to our theoretical analysis. We list their findings after defining relevant concepts.
\begin{definition}[Subgraph]
    A \textit{subgraph} $G^{[S]}(V^{[S]}, E^{[S]}))$ of a graph $G(V,E)$ consists of subsets of its nodes, i.e., $V^{[S]} \subseteq V$ and edges, i.e., $E^{[S]}\subseteq E$.
\end{definition}
\begin{definition}[Induced subgraph]
    A \textit{induced subgraph} $G^{[S']}(V^{[S']}, E^{[S']})$ of a graph $G(V,E)$ consists of subset of its nodes, i.e., $V^{[S']}\subseteq V$ and all edges between nodes $V^{[S']}$, i.e., $E^{[S']}= E \cap V^{[S']}$.
\end{definition}
All induced subgraphs are subgraphs but reverse is not true. For example, all causal network motfis are induced subgraphs (and subgraphs) of the original graph. An open triad motif is a subgraph, but not an induced subgraph, of a closed triad motif.
\begin{definition}[Star-shaped pattern]
    A pattern $G^{[P]}(V^{[P]}, E^{[P]})$ is a star-shaped pattern if it can be represented by a tree structure.
\end{definition}
\begin{definition}[Connected pattern]
    A pattern $G^{[P]}(V^{[P]}, E^{[P]})$ is a connected pattern if it \textbf{cannot} be represented by a tree structure.
\end{definition}
For example, a closed triad motif is a connected pattern and dyads, open triads, and open tetrads are star-shaped patterns.

\citet{chen-neurips20} obtain the following results on the expressiveness of MPGNNS for counting substructures.

\textbf{Corollary 3.4.}~\cite{chen-neurips20} MPGNNs cannot \textit{induced-subgraph-count} any \textit{connected pattern} with 3 or more nodes.

\textbf{Theorem 3.5.}~\cite{chen-neurips20} MPGNNs can perform \textit{subgraph-count} of \textit{star-shaped patterns}.

\subsubsection{Expressiveness of \ourmodel}
Here, we demonstrate that standard MPGNNs lack the expressiveness to capture closed triad motifs, and our model addresses this limitation.

Without loss of generality, assume node attributes for each node $v_i$ are $<1, T_i>$ and constant edge attributes $<1>$.

\begin{definition}[Expressiveness in counting causal network motifs]
    Let $\mathcal{G}$ be a space of graphs. A representation by an MPGNN $f$ is expressive in counting causal  network motif $G^{[P]}$ if, for all ego networks $G^{[1]},G^{[2]} \in \mathcal{G}$, distinct counts, i.e., $C_I(G^{[1]}, G^{[P]}) \neq C_I(G^{[2]},G^{[P]})$, get distinct representations, i.e., $f(G^{[1]})\neq f(G^{[2]})$, where $C_I$ returns induced-subgraph-count of pattern $G^{[P]}$.
\end{definition}
% \sm{you don't need to prove Lemma 1, in fact Lemma 1 should not be there - it has been proven - just go straight to the point}
% \begin{lemma}[Expressiveness of MPGNNs]
%     MPGNNs can capture dyad, open triad ,and open tetrad causal network motifs but lack the expressiveness to capture closed triad causal network motifs.
% \end{lemma}
% \begin{proof}
%     The proof directly follows from \citet{chen-neurips20}'s Theorem 3.5. and Corollary 3.4. The dyad, open triad, and open tetrad causal network motifs are star-shaped patterns and these  patterns can be counted by MPGNNs (Theorem 3.5.).
%     The closed triad causal network motifs are connected patterns of three nodes and these patterns cannot be counted by MPGNNs (Corollary 3.4.).
% \end{proof}

\begin{proposition}[Expressiveness of \ourmodel]\label{prop:exp}
    \ourmodel is expressive enough to capture all dyad, open triad, closed triad, and open tetrad causal network motifs.
\end{proposition}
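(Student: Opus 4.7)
The plan is to exploit the ego-network transformation in \ourmodel to reduce counting each causal network motif rooted at $v_i$ into counting a star-shaped (tree) pattern in the transformed ego network $\bar{\gG}_i(\bar{\gV}_i, \bar{\gE}_i)$, and then invoke Theorem 3.5 of \citet{chen-neurips20}. The key observation is that $\bar{\gG}_i$ deletes the ego node and all of its incident edges, keeping only the peers $\mathcal{N}_i$ and the edges among them. Under this deletion, each of the four motif families rooted at $v_i$ collapses to a small pattern on the peers: a dyad $(v_i\text{-}v_j)$ becomes a single node in $\bar{\gG}_i$; an open triad becomes two non-adjacent nodes; a closed triad becomes a single edge $v_j\text{-}v_k$; and an open tetrad becomes one of the listed three-node peer patterns (independent triple, edge plus isolated node, or path) as indicated in Figure~\ref{fig:higher-order}. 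Crucially, the closed triad --- which by Corollary 3.4 of \citet{chen-neurips20} cannot be induced-subgraph-counted by any MPGNN acting on the original graph --- becomes a trivial edge pattern after the transformation.

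Next, I would argue that each of the target peer patterns is a star-shaped pattern (a tree, or a disjoint union of trees handled by one additional readout layer). By Theorem 3.5 of \citet{chen-neurips20}, MPGNNs can subgraph-count star-shaped patterns, and one can construct explicit choices of aggregation/update weights in Eq.~\ref{eq:node-agg} so that the per-node hidden state $\vh^L_j$ of peer $v_j$ in $\bar{\gG}_i$ encodes the number of target patterns rooted at $v_j$. Applying the log-linear exposure encoder and the graph-level readout $\bm{\rho}_i \propto \sum_j t_j\,\vh_j^{exp}$ then sums these per-node counts into an ego-level quantity that is a strictly increasing function of the motif count, giving distinct representations for distinct counts as required by the expressiveness definition.

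To handle the causal (attributed) nature of the motifs --- e.g., distinguishing \textit{3c-0}, \textit{3c-1}, and \textit{3c-2} --- I would use the fact that peer treatments $t_k$ and attributes $\bar{\mX}_k$ are injected into the initial node embedding $\vh^0_k = t_k \,||\, \bar{\mX}_k \,||\, \vc_{ik} \,||\, \mZ_{jk}$ in Eq.~\ref{eq:node-agg}. Standard injective one-hot-style constructions (analogous to the GIN argument of \citet{xu-iclr18}) then let the aggregation discriminate peer-treatment patterns on top of the structural count, upgrading the plain-subgraph counting result to attributed subgraph counting on star-shaped peer patterns. Combining this with the structural argument above yields an expressive representation for every causal network motif in the four families.

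The main obstacle I anticipate is the attributed extension of Chen et al.'s result: their theorem is stated for unattributed patterns, so I would need to carefully verify that the injective-aggregation construction preserves the ability to separate counts across all attributed variants simultaneously, rather than one variant at a time. A secondary subtlety is the open-tetrad case whose peer induced subgraph is a two-edge path: a single MPGNN layer on $\bar{\gG}_i$ counts only star-shaped subgraphs of bounded radius, so I would either use two aggregation layers to capture paths of length two or restrict the claim to tetrad variants whose peer subgraph has radius one, consistent with the caveat illustrated in Figure~\ref{fig:higher-order} that the peer-induced subgraph must form a tree.
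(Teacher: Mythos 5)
Your proposal is correct and follows essentially the same route as the paper's proof: both reduce each motif family to its peer-induced pattern in the transformed ego network (closed triads become single edges, the open motifs become isolated-node star patterns) and then invoke Theorem 3.5 and Corollary 3.4 of \citet{chen-neurips20}, with the attributes absorbed as extra feature dimensions. Your additional care about injective aggregation for the attributed variants and the two-layer treatment of path-shaped peer patterns elaborates points the paper handles only briefly, but does not change the argument.
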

\begin{proof}
We proceed the proof by dividing the statement into following two claims.
\begin{claim}
    \ourmodel is as expressive as standard MPGNN in capturing dyad, open triad, and open tetrad causal network motifs.
\end{claim}
\begin{proof}
    The dyad, open triad, and open tetrad causal network motifs are star-shaped patterns, and these patterns can be counted by standard MPGNNs (\citet{chen-neurips20}'s Theorem 3.5.). Our model employs MPGNN (refer Eq. \ref{eq:node-agg} and Figure \ref{fig:framework}) on a transformed graph, where all edges connected to the ego node are removed, and the corresponding edge attributes from the removed edges are included as node attributes in the transformed graph. We need to show that this transformation preserves the expressiveness to capture dyad, open triad, and open tetrad causal network motifs. The dyad, open triad, and open tetrad causal network motifs are transformed into subgraphs with isolated one, two, and three nodes, respectively, in the transformed ego network. MPGNN in the transformed graph can perform a subgraph count of patterns with k isolated nodes because they are subgraphs of star-shaped patterns with an empty set of edges. Furthermore, the addition of new attributes does not affect the expressiveness because these attributes are added as additional feature dimensions. Hence, our model is as expressive as standard MPGNN for capturing dyad, open triad, and open tetrad causal network motifs.
\end{proof}
\begin{claim}
    \ourmodel also captures closed triad causal network motifs.
    % \sm{I would just stop here} that cannot be captured by standard MPGNNs.
\end{claim}
\begin{proof}
The closed triad causal network motifs are connected patterns of three nodes and these patterns cannot be counted by standard MPGNNs (\citet{chen-neurips20}'s Corollary 3.4.).
% Suppose, for contradiction, \ourmodel cannot capture closed triad causal network motifs.
Due to the construction of the ego network, all the edges with the ego node are removed, and the closed triads are transformed to dyads in the transformed ego network. These dyads can be counted by node aggregation (refer Eq. \ref{eq:node-agg}), which is an MPGNN employed in the ego network.
% \sm{here you have to make it clear that MPGNN is employed within \ourmodel,because people migght be confused about usage of MPGNN's in the next sentence}.
Therefore, \ourmodel captures closed triad causal network motifs.
% \sm{I would stop here} that cannot be captured by standard MPGNNs.
\end{proof}
\end{proof}

% \sm{what is the purpose of this paragraph? Ans: To show we are not just limited to four causal network motifs and capture more complex motifs.}
\textbf{Higher-order causal network motifs and attributed causal network motifs.} Here, we show how our model is superior to the approach of counting predetermined causal network motifs by discussing \ourmodel's ability to capture relevant causal network motifs including higher-order and attributed causal network motifs. Proposition \ref{prop:exp} showed our model is as expressive as the approach of counting predetermined causal network motifs considered by \citet{yuan-www21}. In general, if the subgraph of a network motif, after removing edges connected to the ego node, forms a tree, then \ourmodel is expressive enough to capture the network motif and the corresponding causal network motifs. Figure \ref{fig:higher-order} depicts some examples of higher-order motifs with four and five nodes. \ourmodel, with depths of $L=2$ and $L=3$ (refer Eq. \ref{eq:node-agg}), is expressive enough to capture most higher-order motifs with four and five nodes, respectively. Only if the network motifs consist of a cycle without the involvement of the ego node, then \ourmodel is not expressive enough to capture it. Furthermore, compared to predetermined causal network motifs, \ourmodel can accommodate motifs with additional node and edge attributes. Incorporating node and edge attributes will not reduce the expressiveness of counting original causal network motifs because these attributes are added as additional feature dimensions.

\subsubsection{Time complexity of \ourmodel-TARNet}\label{ap-theory-time}
Typically, the complexity of a standard MPGNN (e.g. GCN), is $O(NLF^2 + L|E|F)$, where $N$,$|E|$, $L$, and $F$ are the number of nodes, edges, GNN layers, and the dimensionality of feature embeddings, respectively~\cite{blakely-github2021}. In our model, the feature mapping MPGNN (refer to Eq. \ref{eq:fmap}) has the time complexity of $O(d_{\Theta}N{F_x}^2)$ for ego feature embedding module $\Theta_0(\mathbf{X_i})$, where $d_{\Theta}$ is the depth of MLP and $F_x$ is the dimensionality of node feature embedding, and $O(Ld_{\Theta}|E|F^2 + L|E|F)$ for peer feature embedding and aggregation, where $F = F_x + F_z$ is the dimensionality of node and edge feature embeddings. For node aggregation (refer to Eq. \ref{eq:node-agg}), we extract ego network for each node and perform neighborhood aggregation. Therefore, the time complexity is $O(NL|\Bar{E}_{max}|F)$, where $|\Bar{E}_{max}|$ is the number of maximum edges in the ego network. For subsequent masking and exposure encoding MLP, the time complexity is $O(Nd_{MLP}|\Bar{E}_{max}|F^2)$, where $d_{MLP}$ is the depth considering overall MLPs.

Assuming a single-layer MPGNN with $F<<N<|E|$, for simplicity, a standard MPGNN scales linearly with the number of edges, i.e., $O(|E|)$ or $O(N\times avg(D))$, where $avg(D)$ is the average degree.
Similarly, for \ourmodel the time complexity simplifies to $O(N \times |\Bar{E}_{max}|)$. In the worst case, $ |\Bar{E}_{max}|=max(D)^2$, where $max(D)$ is the maximum degree in the network $G(V,E)$. However, since networks are generally sparse, the approximate runtime complexity for networks with uniform degree (e.g., Watts Strogatz network or Stochastic Block Model network) is $O(N \times P_e \times avg(D)^2)$, where $P_e$ is density of edges. So, our method is approximately $P_e \times avg(D)$ times more computationally expensive than standard MPGNNs. On the other hand, the time complexity for counting predetermined causal network motifs with $K$ nodes is $O(N max(D)^{K-1})$, assuming access to O(1) adjacency set and adjacency matrix. This approach scales poorly with higher-order motifs and \ourmodel mitigates the problem by capturing most higher-order motifs with the same computational cost.

\subsubsection{Counterfactual outcome prediction error bounds for \ourmodel}\label{ap-theory-error}
Our work utilizes \citet{shalit-icml17}'s TARNet and CFR estimators, adapted to network settings, for estimating heterogeneous peer effects in both observational and experimental data. Their analysis shows the $PEHE$ metric is bounded by factual ($F$), i.e., supervised learning and counterfactual ($CF$) prediction error, i.e., $ \epsilon_{PEHE} ( \hat{f_y} ) \le 2( \epsilon_{CF} ( \hat{f_y}) + \epsilon_F ( \hat{f_y} ) - 2 \sigma^2_{\ry}) $, where $ \sigma^2_{\ry}$ is the variance of the outcome. These prediction errors or biases incorporate \citet{savje-biometrika24}'s definition of exposure mapping specification errors along with feature representation errors and outcome prediction errors.
% By learning a relevant exposure mapping function, our approach reliably reduces factual prediction error. It performs better than the baselines for reducing the counterfactual prediction error (due to informative peer exposure, Table 2.1 above). 

Moreover, \citet{shalit-icml17} show that the bound for counterfactual prediction error (which cannot be measured in the real world) depends on the Integral Probability Metric (IPM) measure of distance between treatment and control group distribution, which implies $\epsilon_{PEHE}(\hat{f_y})\le 2(\epsilon_F^{\rt_i=1}(\hat{f_y}) + \epsilon_F^{\rt_i=0}(\hat{f_y}) + \alpha IPM(\{\vh^{emb}_i: t_i=1\}, \{\vh^{emb}_i: t_i=0\}) - 2\sigma^2_Y)$, where $t_i=\pi_i$ denotes conditioning,  $\vh^{emb}_i = \Theta_{emb}(\hat{\vc_i}||\hat{\bm{\rho}_i})$, and $||$ denotes concatenation. 
% The IPM metric is approximated by metrics like Wasserstein distance between embeddings.
To study how misspecification errors of EgoNetGNN propagate to the factual prediction error, we can substitute the oracle values and estimated values (denoted with hat) and further decompose the errors by using sequential error decomposition trick, i.e.,
$$\epsilon_F^{\rt_i=\pi_i}(\hat{f}_y) = \mathbb{E}[(\hat{\ry_i}-\ry_i)^2]$$
$$\hat{\ry_i} - \ry_i =  \hat{f_y}(\pi_i, \hat{\bm{\rho}}_i,\hat{\rvc_i}) - f_y(\pi_i, \bm{\rho}_i,\rvc_i)$$
$\hat{\ry_i} - \ry_i = \epsilon_y + \epsilon_e + \epsilon_f$, where $\epsilon_y$ captures error due to learned outcome prediction module using learned representations, i.e.,
$$\epsilon_y := \hat{f_y}(\pi_i, \hat{\bm{\rho}}_i,\hat{\rvc_i}) - f_y(\pi_i, \hat{\bm{\rho}}_i,\hat{\rvc_i}),$$
$\epsilon_e$ captures error due to exposure mapping misspecification using learned feature representation but true outcome prediction module, i.e.,
$$\epsilon_e:=f_y(\pi_i, \hat{\bm{\rho}}_i,\hat{\rvc_i}) - f_y(\pi_i, \bm{\rho}_i,\hat{\rvc_i}),$$
and, finally, $\epsilon_f$ captures error due to feature mapping misspecification but true exposure and outcome prediction function, i.e.,
$$\epsilon_f:=f_y(\pi_i, \bm{\rho}_i,\hat{\rvc_i}) - f_y(\pi_i, \bm{\rho}_i,\rvc_i).$$

By plugging these decomposed errors in the factual prediction loss, we get,
$$\epsilon_F^{\rt_i=\pi_i}(\hat{f}_y) = \mathbb{E}[(\epsilon_y+\epsilon_e+\epsilon_f)^2]$$
$$=\E[\epsilon_y^2]+\E[\epsilon_e^2]+\E[\epsilon_f^2] + 2(\E[\epsilon_y \epsilon_e]+\E[\epsilon_e \epsilon_f] + \E[\epsilon_f \epsilon_y]).$$

By automatically learning relevant exposure mapping function, we aim to directly minimize the error terms involving $\epsilon_e$ and the downstream error $\epsilon_y$. Other estimators (e.g., Doubly robust or orthogonal learning after handling unknown exposure mapping function) can be employed in future work for more tight error bounds.

\subsection{Dataset Generation} \label{ap-dataset}
For the Barabasi Albert (BA) model, the preferential attachment parameter $m \in [1, 5, 10]$ is used to generate sparse to dense networks, where a new node connects to $p_{ba}$ existing nodes to form the network. For the Watts Strogatz (WS) model, we set mean degree parameters $k \in \{0.002N, 0.005N, 0.01N\}$ with fixed rewiring probability of $0.5$, similar to prior works~\cite{yuan-www21,adhikari-mlj24}. For the Stochastic Block Model (SBM) model, we use the number of blocks parameters $b \in \{500, 200, 100\}$ with randomly generated edge probabilities within and across communities. We also use two real-world social networks BlogCatalog and Flickr with more realistic topology and attributes to generate treatments and outcomes. We use LDA~\cite{blei-jmlr03} to reduce the dimensionality of raw features to $50$.

\textbf{Treatment model}.
The treatment assignments could depend on the unit's covariates as well as peer covariates and some edge attribute.
We generate treatment $T_i$ for a unit $v_i$ as $T_i \sim \theta \big(a(\tau_c\mathbf{W}_T \times \frac{\sum_{j \in \mathcal{N}_i} \mathbf{X^c}_j }{\sum_{j \in \mathcal{N}_i} Z^c_{ij}}) + (1-\tau_c)\mathbf{W}_T \cdot \mathbf{X^c}_i\big)$, where $\theta$ denotes Bernoulli distribution, $a:\mathbb{R}\mapsto [0,1]$ is an activation function, $\tau_c\in[0,1]$ controls spillover influence from unit $v_i$'s peers, $\mathbf{X^c} \subset \mathbf{X}$ is a subset of node attributes, $Z^c \in \mathbf{Z}$ is an edge attribute, and $\mathbf{W}_T$ is a weight matrix.

\textbf{Outcome model}. The outcomes depend on unit's treatment, peer treatments based on the local neighborhood condition, the confounders, and the effect modifiers. We generate outcome $Y_i$ for a unit $v_i$ as:
\begin{equation}
\begin{split}
    &Y_i = (\delta_{exp} + \delta_{em}\times T_i) \times \phi_{e}(G,\mathbf{X},\mathbf{Z},T_{-i}) + \\
    &(\tau_d  + \tau_{em} \times \phi_{em}(G,\mathbf{X},\mathbf{Z})) \times T_i + g(\mathbf{X_c}, Z_c, G) + \epsilon.
\end{split}
\label{eq:outcome}
\end{equation}
Here, the first term $(\delta_{exp} + \delta_{em}\times T_i) \times \phi_{e}(G,\mathbf{X}, \mathbf{Z},T_{-i})$ captures peer effects, where $\phi_{e}(G,\mathbf{X},\mathbf{Z},T_{-i})$ captures true peer exposure that depends on local neighborhood condition (e.g., the number of mutual connections between treated peers and ego unit or attribute similarity) and $\delta_{exp}$ and $\delta_{em}$ are coefficients controlling magnitude/direction of peer effects. The term $g(\mathbf{X_c}, Z_c, G)$ captures confounding and $\epsilon \sim \mathcal{N}(0,1)$ is random noise. The remaining term captures direct effect due to unit's own treatment with effect modification by some contexts. For semi-synthetic data, to generate heterogeneous peer effects, we use additional effect modification due to a unit's covariates, i.e., $\delta_{em}\times T_i\times \phi_v(\mathbf{X}_{em}),$ where $\mathbf{X}_{em} \subset \mathbf{X}$ and $\phi_v$ is a weighted mean function with randomly generated weights. Please refer to the source code in anonymous repository for detailed implementation of data generation.

\subsection{Additional Experimental Settings} \label{ap-hyperparams}

\textbf{Model implementation, hyperparameters, and model selection}. For the experiments, we choose $\lambda_{bal}=0.01$ for encouraging balanced representation and L1 loss regularization coefficient $\lambda_{L1}=1$ for encouraging invariance to irrelevant mechanism. We set the output embedding dimension of exposure encoder MLP to $3$ giving 6-dimensional peer exposure representation. We use $1-layer$ deep MPGNNs for feature and exposure mapping functions.
Moreover, we perform grid search hyperparameter tuning by varying GNN learning rate $\{0.1, 0.04, 0.02, 0.01\}$, and setting TARNet learning rate to $0.01$. We use Adam optimizer with weight decay of $10^{-5}$ and the learning rate is decayed by $50\%$ after $50$ epochs. A 20\% held-out dataset is used for model selection, where model with lowest outcome prediction loss $L_{Y_i}$ is chosen for reporting. We employ model checkpointing every other epoch to select the best performing model in a total of $100$ epochs. Our implementation is similar to \citet{adhikari-mlj24}'s INE-TARNet (also known as IDE-Net in original paper) in terms of MLP with residual network architecture, parameter tuning and model selection, and data generation.

The baselines INE-TARNet and GNN-TARNet-Motifs are also tuned similarly to our method by conducting grid search of the GNN's learning rate with $\{0.2, 0.02\}$ and variance smoothing regularization hyperparameter with $\{0.1, 1\}$, keeping TARNet's learning rate $0.02$ and other hyperparameters default. DWR is calibrated for $5$ epochs to balance representation. For other baselines, we use default hyperparameters.

\textbf{Implementation of baselines}. We use publicly available code shared for the baselines INE-TARNet~\cite{adhikari-mlj24}, TNet~\cite{chen-icml24}, NetEstimator~\cite{jiang-cikm22}, and CauGramer~\cite{wu-iclr25}. We adapt the code provided by authors to extend it for peer effect estimation for AEMNet~\cite{mao-icassp25}. We implement 1GNN-HSIC~\cite{ma-aistats21} and DWR~\cite{zhao-arxiv22} ourselves following the paper as closely as possible. GNN-TARNet-MOTIFS is available as a baseline of INE-TARNet. 

\textbf{Computational resources}. All the experiments are performed in a machine with the following resources.
\begin{itemize}
    \item CPU: AMD EPYC 7662 64-Core Processor (128 CPUs)
    \item Memory: 256 GB RAM
    \item Operating system: Ubuntu 20.04.4 LTS
    \item GPU: NVIDIA RTX A5000 (24 GB)
    \item CUDA Version: 11.4
\end{itemize}
As discussed in Section \ref{ap-theory}, the runtime of computation depends on the number of nodes and the number of edges in the ego networks along with the feature dimension. Here, we report execution time per iteration for training, evaluating, and checkpointing our model for synthetic and semi-synthetic network data. For the Barabasi Albert network with 3000 nodes, which is sparser, it takes approximately 2.1 seconds per iteration, whereas, for a Stochastic Block Model (SBM) with 3000 nodes, which is denser, it takes approximately 3.3 seconds per iteration. For the BlogCatalog network with 5196 nodes and 50-dimensional features, it takes around 5.7 seconds per iteration.

\subsection{Synthetic Data Experiments and Results}\label{ap-result-synthetic}
Figures \ref{fig:cc} to \ref{fig:attr-sim} show the performance of our method and baselines for three synthetic networks when the underlying peer exposure mechanisms depend on clustering coefficient, connected components, number of mutual connections, tie strengths, and attribute similarity. The results discussed in the main paper apply to additional peer exposure mechanisms and data generation conditions.
\begin{figure*}[!ht]
    \centering
    \includegraphics[width=\linewidth]{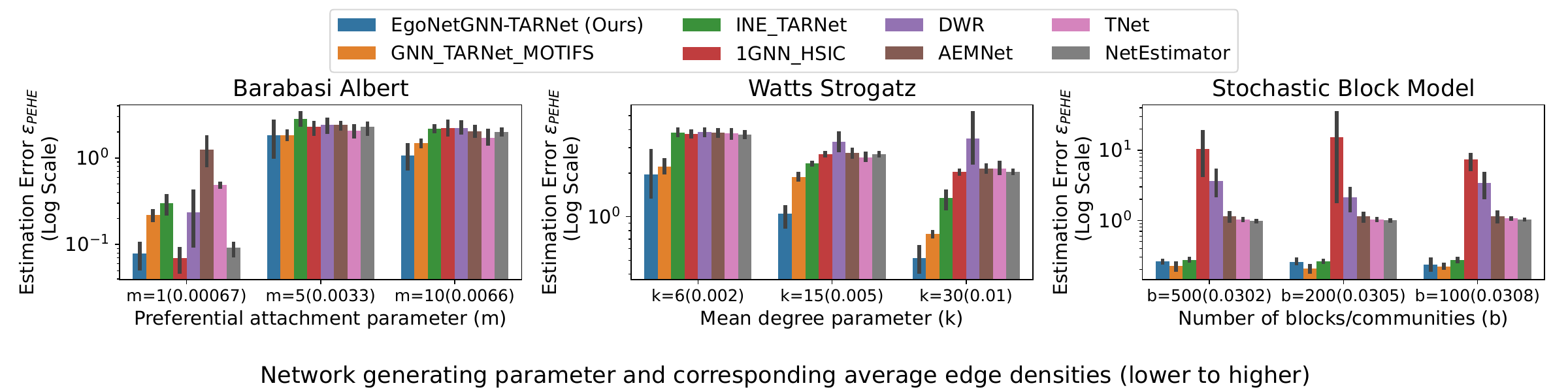}
    \caption{{Peer effect estimation error when true peer exposure depends on clustering coefficient among treated peers. Our method is better than or competitive to baseline using predetermined causal network motif counts when the underlying peer exposure mechanism can be explained by causal network motif counts.}}
    \label{fig:cc}
\end{figure*}
\begin{figure*}[!ht]
    \centering
    \includegraphics[width=\linewidth]{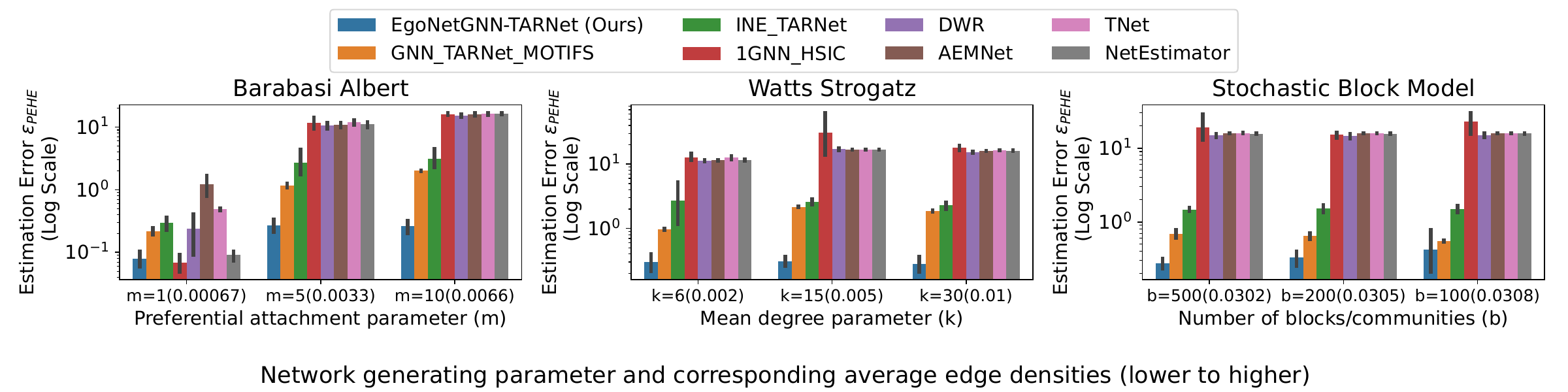}
    \caption{{Peer effect estimation error when true peer exposure depends on number of mutual connections with the ego. Our method significantly outperforms all baselines showing its capability to count closed triad network motifs (i.e., triangle substructures) in the ego network.}}
    \label{fig:mut-frns}
\end{figure*}
\begin{figure*}[!ht]
    \centering
    \includegraphics[width=\linewidth]{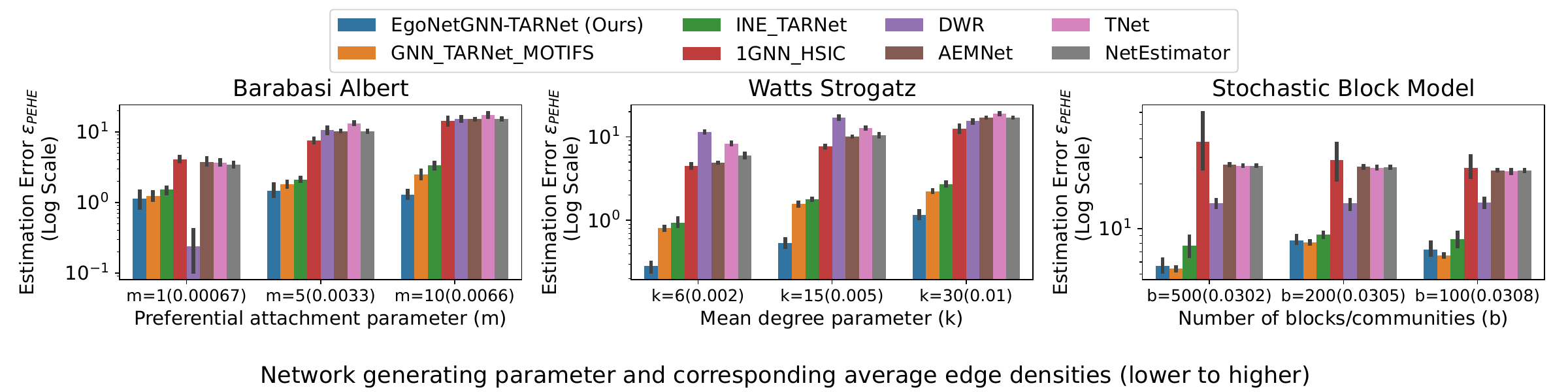}
    \caption{{Peer effect estimation error when true peer exposure depends on connected components among treated peers. Our method performs well compared to all baselines when underlying peer exposure mechanism cannot be explained totally with causal network motif structures only.}}
    \label{fig:con-comp}
\end{figure*}
\begin{figure*}[!ht]
    \centering
    \includegraphics[width=\linewidth]{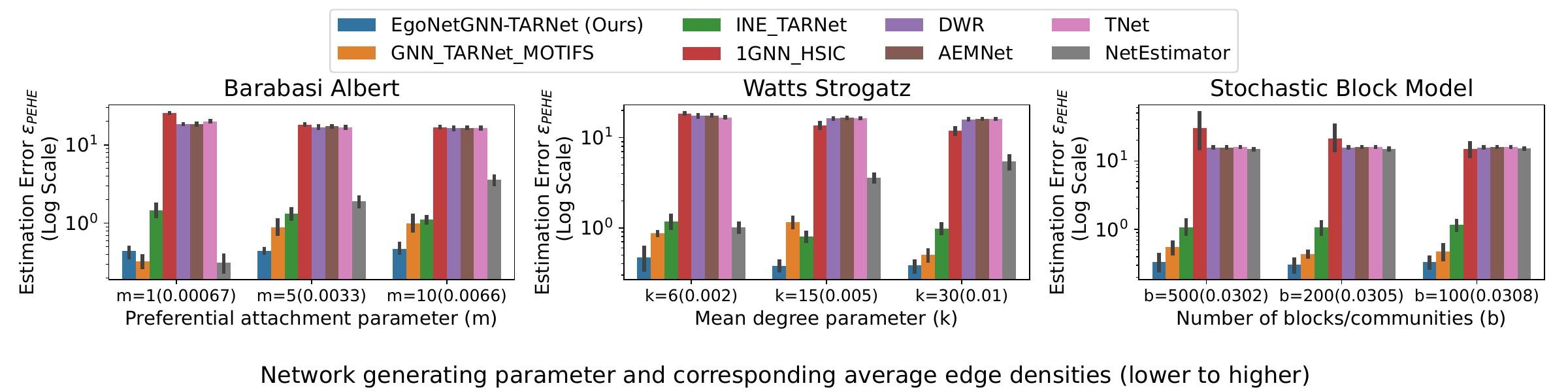}
    \caption{{Peer effect estimation error when true peer exposure depends on tie strengths between ego and treated peers. Our method consistently outperforms all baselines because it can incorporate edge attributes and learn if those attributes are relevant for underlying peer exposure mechanisms.}}
    \label{fig:tie-str}
\end{figure*}
\begin{figure*}[!ht]
    \centering
    \includegraphics[width=\linewidth]{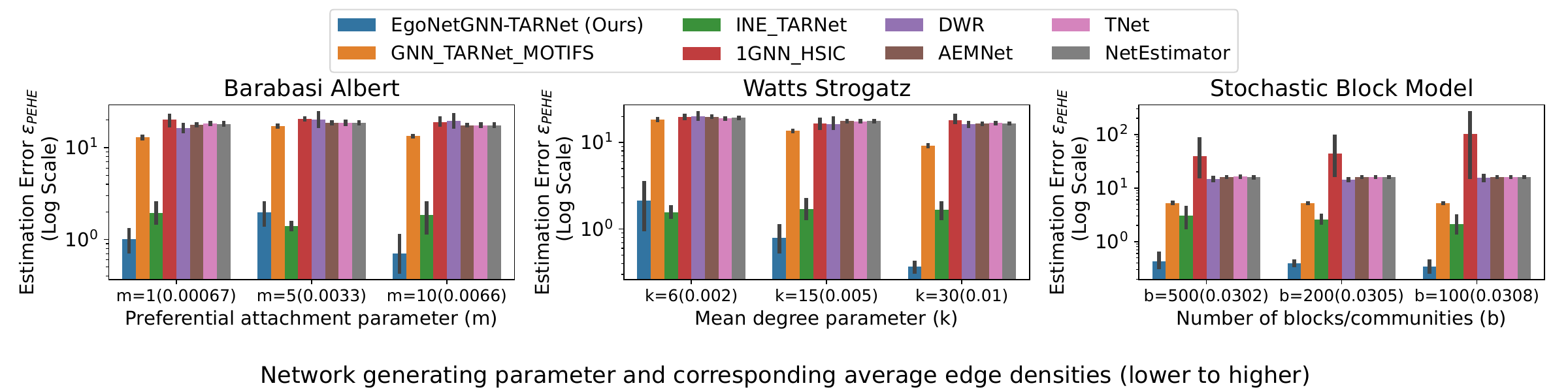}
    \caption{{Peer effect estimation error when true peer exposure depends on attribute similarity between ego and treated peers. Our method consistently outperforms all baselines because it can capture and learn if attribute similarity are relevant for underlying peer exposure mechanisms.}}
    \label{fig:attr-sim}
\end{figure*}
% \clearpage

% \begin{figure*}[!ht]
%     \centering
%     \includegraphics[width=\linewidth]{images/tie_strengths_kdd.jpg}
%     \caption{Peer effect estimation error when true peer exposure depends on tie strengths between treated peers. Our method is better than the motif-count based baseline for WS and BA networks but competitive to causal motif counts for the SB network.}
%     \label{fig:tie}
% \end{figure*}

\subsection{Semi-synthetic Data Experiments and Results}\label{ap-result-semi}

First, we present results for RQ2 for the Flickr dataset in Table \ref{tab:exp-semi-flickr}. Either \ourmodel-CFR or \ourmodel-TARNet is still the best performing model in all settings. For mechanisms involving attribute similarity and clustering coefficient, \ourmodel-TARNet is slightly better than \ourmodel-CFR, most likely due to \ourmodel-CFR's sensitivity to hyperparameter. INE-TARNet is the baseline with competitive performance.

\begin{table}[!h]
    \caption{Mean and standard deviation of peer effect estimation error ($\epsilon_{PEHE}$) for different methods in BlogCatalog (BC) dataset for four settings when true peer exposure mechanisms depend on clustering coefficients, connected components, mutual connections, and attribute similarity. Either \ourmodel-TARNet or \ourmodel-CFR outperforms all other baselines across multiple settings.}
    \label{tab:exp-semi-flickr}
    \centering
\resizebox{\textwidth}{!}{
\begin{tabular}{p{5.1em}p{3em}p{3em}p{3em}p{3em}p{3em}p{3.1em}p{3.1em}p{3em}p{3em}p{3em}}
\toprule
{Mechanisms} & Ours-TARNet & \newchange{Ours-CFR} & GNN-Motifs &        INE-TARNet &          1GNN-HSIC &                DWR &             AEMNet &                TNet &       NetEst & \newchange{CauGramer}\\
\midrule
Clus. Coef. &   {\small $\mathbf{4.93}_{\pm 1.6}$} & {\small $\underline{5.12}_{\pm 1.8}$} &   {\small $5.34_{\pm 1.5}$} &   {\small ${5.26}_{\pm 1.6}$} &   {\small $9.56_{\pm 4.9}$} &   {\small $9.51_{\pm 2.2}$} &    {\small $8.05_{\pm 5.5}$} &    {\small $9.75_{\pm 4.6}$} &   {\small $7.57_{\pm 1.3}$} & {\small $7.84_{\pm 0.7}$}\\
Con. Comp.    &   {\small $\underline{1.83}_{\pm 0.6}$} & {\small $\mathbf{1.40}_{\pm 0.5}$} &   {\small $2.80_{\pm 1.2}$} &   {\small ${1.85}_{\pm 0.7}$} &   {\small $3.36_{\pm 0.8}$} &   {\small $2.75_{\pm 0.6}$} &    {\small $4.69_{\pm 1.7}$} &    {\small $2.94_{\pm 0.9}$} &   {\small $2.67_{\pm 0.5}$} &   {\small $2.84_{\pm 0.6}$}\\
Mut. Con.      &   {\small ${2.38}_{\pm 1.3}$}  &   {\small $\mathbf{1.99}_{\pm 1.2}$}&    {\small $2.55_{\pm 0.5}$} &   {\small $\underline{2.36}_{\pm 0.6}$} &   {\small $4.03_{\pm 1.6}$} &   {\small $3.57_{\pm 1.7}$} &  {\small $10.95_{\pm 12.3}$} &  {\small $10.96_{\pm 17.2}$} &   {\small $4.24_{\pm 1.8}$} &   {\small $4.34_{\pm 1.9}$}\\
Attr. Sim.    &  {\small $\mathbf{11.32}_{\pm 6.6}$}&  {\small ${13.06}_{\pm 12.7}$} &  {\small $13.15_{\pm 10.8}$} &  {\small $\underline{12.17}_{\pm 8.8}$} &  {\small $16.94_{\pm 8.1}$} &  {\small $18.03_{\pm 9.7}$} &  {\small $17.43_{\pm 10.0}$} &  {\small $23.09_{\pm 20.3}$} &  {\small $16.87_{\pm 7.8}$} &  {\small $20.38_{\pm 11.6}$}\\
\bottomrule
\end{tabular}
}
\end{table}
Next, we utilize \ourmodel's feature mapping MPGNN $\hat{\phi}_f$ and outcome prediction model $\hat{f}_Y$ in the leading two baselines: GNN-TARNet-MOTIFS and INE-TARNet. The goal of this experiment is to ascertain the contribution of \ourmodel-TARNet's exposure mapping function $\hat{\phi}_e$. Table \ref{tab:bc-fm-om} shows the mean and standard deviation of peer effect estimation error ($\epsilon_{PEHE}$) for \ourmodel and these baselines in BlogCatalog (BC) dataset for four settings when true peer exposure mechanisms depend on clustering coefficients, connected components, mutual connections, and attribute similarity. The results show our method still performs better than the baselines, verifying the contribution of the learned exposure mapping function.

\begin{table}[h]
    \centering
    \caption{Mean and standard deviation of peer effect estimation error ($\epsilon_{PEHE}$) for \ourmodel and top baselines using \ourmodel's feature mapping and outcome prediction in BlogCatalog (BC) dataset for four settings when true peer exposure mechanisms depend on clustering coefficients, connected components, mutual connections, and attribute similarity.}
    \begin{tabular}{llll}
\toprule
Method &           EgoNetGNN-TARNet &          GNN-TARNet-MOTIFS &                 INE-TARNet \\
Mechanism              &                            &                            &                            \\
\midrule

Clustering Coefficient &  {\small $\mathbf{1.59}_{\pm 0.4}$} &  {\small $2.09_{\pm 1.2}$} &  {\small $2.73_{\pm 0.6}$} \\
Connected Components   &  {\small $\mathbf{2.98}_{\pm 0.8}$} &  {\small $4.08_{\pm 1.0}$} &  {\small $4.52_{\pm 1.0}$} \\
Mutual Connections     &  {\small $\mathbf{2.90}_{\pm 1.1}$} &  {\small $3.50_{\pm 0.7}$} &  {\small $4.66_{\pm 2.1}$} \\
Attribute Similarity   &  {\small $\mathbf{5.65}_{\pm 0.7}$} &  {\small $6.95_{\pm 0.9}$} &  {\small $5.86_{\pm 2.1}$} \\
\bottomrule
\end{tabular}
    \label{tab:bc-fm-om}
\end{table}

\subsection{Ablation Studies and Hyperparameter Sensitivity} \label{ap-result-ablation}
Table \ref{tab:balance} presents the performance of \ourmodel without balance loss, i.e., $\lambda_{bal}=0$, and with two different coefficients of balance loss, i.e., $\lambda_{bal}=0.01$ and $\lambda_{bal}=0.1$ for four settings when true peer exposure mechanisms depend on clustering coefficients, connected components, mutual connections, and attribute similarity. In general, using balance loss with a small coefficient results in a more robust performance. \ourmodel performs well for more complex peer influence mechanisms in the absence of balance loss. However, the performance for other mechanisms is comparatively poor in the absence of balance loss.

Table \ref{tab:outdim} shows the performance of \ourmodel for different output dimension of peer exposure embedding $\bm{\rho}_i$ for four settings when true peer exposure mechanisms depend on clustering coefficients, connected components, mutual connections, and attribute similarity.
As seen in the results, lower-dimensional peer exposure embeddings could lose expressiveness, while higher dimensions could introduce variance due to irrelevant contexts or violations of positivity. Lower-dimensional peer exposure embedding has better performance for simpler peer exposure mechanism like clustering coefficient and higher-dimensional peer exposure embedding has better performance for complex peer exposure mechanism like attribute similarity.

Table \ref{tab:noisy} shows the performance of EgoNetGNN and top baselines in the BlogCatalog Data when the network is augmented to make it noisy by randomly removing or adding 10 and 20 percent of edges. We expect the models to perform inconsistently or worse with higher noise. The results show that for different noisy settings, our model is consistently better than the baselines. The results, however, do not show an obvious trend of higher degradation in performance with high noise. This may be because the augmentation by randomly adding or removing edges may still preserve the signal to capture underlying peer exposure mechanisms.

\begin{table}[]
    \centering
    \caption{Performance of \ourmodel in BlogCatalog Data for different coefficients of balance loss for four settings when true peer exposure mechanisms depend on clustering coefficients, connected components, mutual connections, and attribute similarity.}
    \label{tab:balance}
    \begin{tabular}{llll}
\toprule
$\lambda_{bal}$ &                       $0.00$ &                       $0.01$ &                       $0.10$ \\
Mechanism              &                            &                            &                            \\
\midrule
Clustering Coefficient &  {\small $1.96_{\pm 1.1}$} &  {\small $\underline{1.59}_{\pm 0.4}$} &  {\small $\mathbf{1.33}_{\pm 0.3}$} \\
Connected Components    &  {\small $\mathbf{2.90}_{\pm 0.8}$} &  {\small $\underline{2.98}_{\pm 0.8}$} &  {\small $3.08_{\pm 1.0}$} \\
Mutual Connections      &  {\small $3.35_{\pm 0.7}$} &  {\small $\mathbf{2.90}_{\pm 1.1}$} &  {\small $\underline{2.92}_{\pm 0.7}$} \\
Attribute Similarity   &  {\small $\mathbf{5.54}_{\pm 0.6}$} &  {\small $\underline{5.65}_{\pm 0.7}$} &  {\small $5.77_{\pm 0.6}$} \\
\bottomrule
\end{tabular}
\end{table}

\begin{table}[t]
    \centering
     \caption{Performance of \ourmodel in BlogCatalog Data for different output dimension of peer exposure embedding $\bm{\rho}_i$ for four settings when true peer exposure mechanisms depend on clustering coefficients, connected components, mutual connections, and attribute similarity.}
    \label{tab:outdim}
    \begin{tabular}{llll}
\toprule
Output Dimension &                          2 &                          6 &                          10 \\
Mechanism              &                            &                            &                            \\
\midrule

Clustering Coefficient &  {\small $\mathbf{1.51}_{\pm 0.5}$} &  {\small $\underline{1.59}_{\pm 0.4}$} &  {\small $1.95_{\pm 1.4}$} \\
Connected Components   &  {\small $3.35_{\pm 0.4}$} &  {\small $\mathbf{2.98}_{\pm 0.8}$} &  {\small $\underline{3.03}_{\pm 0.8}$} \\
Mutual Connections     &  {\small $\underline{3.09}_{\pm 1.1}$} &  {\small $\mathbf{2.90}_{\pm 1.1}$} &  {\small $3.17_{\pm 0.9}$} \\
Attribute Similarity   &  {\small $6.55_{\pm 1.8}$} &  {\small $\underline{5.65}_{\pm 0.7}$} &  {\small $\mathbf{5.54}_{\pm 0.7}$} \\
\bottomrule
\end{tabular}
   
\end{table}

\begin{table}[ht]
    \centering
    \caption{Performance of \ourmodel and top baselines in the BlogCatalog Data when the network is augmented to make it noisy by randomly removing or adding a certain percentage of edges.}
    \label{tab:noisy}
    \begin{tabular}{lllllll}
\toprule
                   & Edge Augmentation &                        -20\% &                        -10\% &                         0\%  &                         10\% &                         20\% \\
Mechanism & Estimator &                            &                            &                            &                            &                            \\
\midrule
Attribute Similarity & EgoNetGNN-TARNet &  {\small $\mathbf{5.51}_{\pm 1.0}$} &  {\small $6.03_{\pm 0.9}$} &  {\small $\mathbf{5.65}_{\pm 0.7}$} &  {\small $\mathbf{5.67}_{\pm 0.5}$} &  {\small $\mathbf{5.77}_{\pm 0.5}$} \\
                   & GNN-TARNet-MOTIFS &  {\small $7.19_{\pm 2.1}$} &  {\small $6.74_{\pm 1.0}$} &  {\small $6.09_{\pm 0.2}$} &  {\small $6.66_{\pm 0.6}$} &  {\small $6.54_{\pm 1.1}$} \\
                   & INE-TARNet &  {\small $5.97_{\pm 1.2}$} &  {\small $\mathbf{5.88}_{\pm 1.6}$} &  {\small $6.01_{\pm 2.0}$} &  {\small $5.86_{\pm 0.6}$} &  {\small $6.50_{\pm 0.8}$} \\
\midrule                   
Clustering Coefficient & EgoNetGNN-TARNet &  {\small $\mathbf{1.55}_{\pm 0.4}$} &  {\small $\mathbf{1.59}_{\pm 0.3}$} &  {\small $\mathbf{1.59}_{\pm 0.4}$} &  {\small $\mathbf{1.36}_{\pm 0.4}$} &  {\small $2.14_{\pm 1.2}$} \\
                   & GNN-TARNet-MOTIFS &  {\small $2.19_{\pm 1.1}$} &  {\small $1.89_{\pm 0.6}$} &  {\small $2.04_{\pm 0.7}$} &  {\small $1.90_{\pm 0.5}$} &  {\small $\mathbf{1.94}_{\pm 0.7}$} \\
                   & INE-TARNet &  {\small $1.79_{\pm 0.4}$} &  {\small $1.80_{\pm 0.4}$} &  {\small $2.24_{\pm 0.6}$} &  {\small $1.78_{\pm 0.4}$} &  {\small $1.95_{\pm 0.5}$} \\
\midrule                  
Mutual Connections & EgoNetGNN-TARNet &  {\small $\mathbf{3.09}_{\pm 0.3}$} &  {\small $\mathbf{3.00}_{\pm 0.5}$} &  {\small $\mathbf{2.90}_{\pm 1.1}$} &  {\small $\mathbf{3.32}_{\pm 0.9}$} &  {\small $\mathbf{2.85}_{\pm 0.5}$} \\
                   & GNN-TARNet-MOTIFS &  {\small $4.00_{\pm 1.2}$} &  {\small $3.67_{\pm 0.6}$} &  {\small $3.83_{\pm 0.7}$} &  {\small $4.57_{\pm 2.3}$} &  {\small $4.23_{\pm 2.5}$} \\
                   & INE-TARNet &  {\small $3.41_{\pm 0.6}$} &  {\small $3.08_{\pm 0.7}$} &  {\small $4.58_{\pm 2.0}$} &  {\small $3.61_{\pm 1.5}$} &  {\small $3.60_{\pm 1.2}$} \\
\bottomrule
\end{tabular}

\end{table}

\end{document}